%% file: acl_latex.tex
\documentclass[11pt]{article}

\usepackage[preprint]{acl}

\usepackage{times}
\usepackage{latexsym}

\usepackage[T1]{fontenc}

\usepackage[utf8]{inputenc}

\usepackage{microtype}

\usepackage{inconsolata}

\usepackage{graphicx}

\usepackage{amsmath}
\usepackage{xspace}
\newcommand{\ours}{GAZER\xspace}
\newcommand{\stageone}{Reflective Diagnosis\xspace}
\newcommand{\stagetwo}{Semantic Correction\xspace}
\newcommand{\ar}{AVM\xspace}
\newcommand{\ars}{AVMs\xspace}

\usepackage[textwidth=3.5cm]{todonotes}

\usepackage{amssymb}
\usepackage{algorithm}
\usepackage{algorithmic}

\usepackage{fontawesome}
\usepackage{multirow} 
\usepackage{colortbl} 
\usepackage{booktabs}
\definecolor{darkgreen}{RGB}{0, 100, 0}
\usepackage{siunitx}
\usepackage{makecell}
\usepackage{tcolorbox}
\tcbuselibrary{listings,breakable}
\usepackage{amsthm}
\definecolor{mypink}{RGB}{255,231,226}

\newtheorem{theorem}{Theorem}[section]

\newtheorem{assumption}[theorem]{Assumption}
\newtheorem{remark}[theorem]{Remark}



\usepackage{soul}
\usepackage{xcolor} 
\usepackage{colortbl}
\usepackage{tcolorbox}
\newtcolorbox{mybox3}[1]{colbacktitle=white,coltitle=black,colback=white,colframe=black,fonttitle=\bfseries,fontupper=\small,title=#1,leftupper=0.5em,rightupper=0.5em,boxrule=1.0pt}

\usepackage{enumitem}

\tcbset{
    fontupper=\ttfamily\small, 
    fontlower=\ttfamily\small, 
    halign=left,    
    boxsep=1mm, left=1mm, right=1mm, top=1mm, bottom=1mm
}

\title{Training-Free Semantic Correction for Autoregressive Visual Models}

\author{
  \textbf{Junhao Chen\textsuperscript{1}},
  \textbf{Chanyu Zhu\textsuperscript{2}},
  \textbf{Zheqi Lv\textsuperscript{1}},
  \textbf{Keting Yin\textsuperscript{1}},
  \textbf{Shengyu Zhang\textsuperscript{1}}
\\
\\
  \textsuperscript{1}Zhejiang University,
  \textsuperscript{2}Shandong University
\\
  \small{
    \texttt{\{chenjunhao100, zheqilv, yinkt, sy\_zhang\}@zju.edu.cn},
    \texttt{zhuchanyu@mail.sdu.edu.cn}
  }
}

\begin{document}
\maketitle

\begin{abstract}
\input{0_abstract}

\end{abstract}

\section{Introduction}
\label{introduction}
\input{1_introduction}

\section{Related Work}
\label{related_work}
\input{2_related_work}

\section{Methodology}
\label{methods}
\input{3_methods}

\section{Experiments}
\label{experiments}
\input{4_experiments}

\section{Conclusion}
\label{conclusion}
\input{6_conclusion}

\section*{Limitations}

Although our experiments demonstrate the effectiveness of \ours across text-to-image and text-to-video generation, several aspects merit further study. \ours is most suitable for errors that can be identified from intermediate semantic evidence, such as objects, attributes, relations, and visible events. More ambiguous prompts or very fine-grained temporal requirements may benefit from stronger feedback models and more specialized verification signals. Furthermore, \ours is designed for next-scale prediction models and does not directly transfer to next-token prediction backbones, where raster-scan intermediate states lack the globally coherent visual structure required for reliable rollout preview diagnosis (see Appendix~\ref{appendix:failure} for details). Future work will explore these directions to make training-free semantic correction more efficient and broadly applicable.

\section*{Ethics Statement}

This work is intended solely for research on improving semantic alignment in text-to-image and text-to-video generation. Our experiments use public benchmarks and publicly available or open-source generation backbones and evaluation models, and we follow the licenses and usage policies associated with these resources. The generated examples are used only for analysis and illustration of model behavior. Since stronger alignment and correction methods could also be misused to create more convincing synthetic content, we encourage responsible deployment with appropriate safeguards, including prompt filtering, provenance tracking, and human review in sensitive applications. We do not use private user data or personally identifying information in our experiments. We used AI assistants for minor writing assistance in preparing this manuscript. All technical content, experiments, and conclusions are the work of the authors.



\bibliography{custom}

\clearpage
\appendix

\input{7_appendix}

\end{document}

%% file: 0_abstract.tex
Autoregressive visual models (\ars) based on next-scale prediction have emerged as a prominent paradigm for image and video synthesis. However, decomposing the generation process into discrete scales with varying granularities in \ar makes semantic errors difficult to identify and correct, thereby undermining the quality of the final output. Prior efforts to enhance \ar can be categorized into training-based and training-free approaches. 
Although training-based efforts to enhance \ar generation quality come at substantial computational cost, existing training-free methods neglect intermediate generation states, leaving semantic errors undiagnosed and allowing them to accumulate into the final output.
In this paper, we focus on training-free paradigms and propose \ours, a framework that integrates multimodal large language model feedback into the \ar sampling loop for in-generation semantic correction. Concretely, \ours operates via two cooperating stages: the \stageone stage diagnoses semantic errors from intermediate states, while the \stagetwo stage rewinds and rectifies the generation trajectory to realign with the target prompt. Experiments on compositional image and video benchmarks demonstrate that \ours improves semantic alignment and compositional accuracy across multiple \ars without additional training.\footnote{Code is available at \url{https://github.com/June-Hall/Gazer}.}

%% file: 1_introduction.tex
\begin{figure*}[t]
    \centering
    \includegraphics[width=\textwidth]{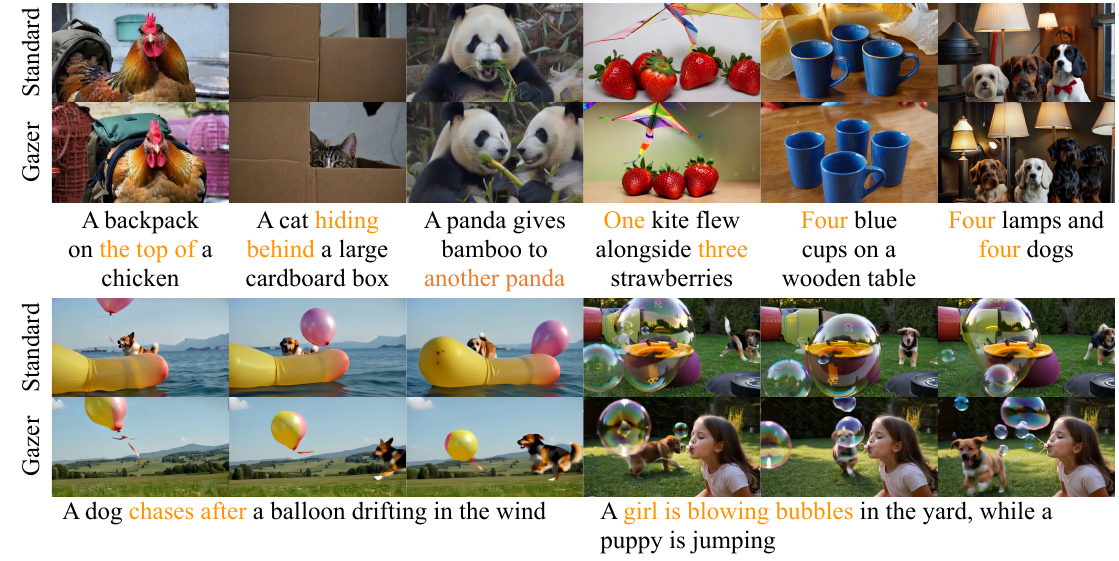}
    \caption{
    \textbf{Qualitative comparison across text-to-image (T2I) and text-to-video (T2V) tasks.}
    Each block corresponds to a text prompt.
    We show results from the baseline and our method for both image generation (top)
    and video generation (bottom, visualized as frame sequences).
    Our approach achieves better semantic alignment with the prompt
    and more coherent temporal dynamics.
    }
    \label{fig:qualitative}
\end{figure*}

Autoregressive (AR) visual models (\ars) have become a leading paradigm for image and video synthesis~\citep{ramesh2021dalle, chen2020generative, lee2022rqtransformer, yu2022scaling}. 
Following a coarse-to-fine design, recent models such as VAR~\citep{tian2024var}, Infinity~\citep{han2025infinity}, InfinityStar~\citep{liu2025infinitystar}, and STAR~\citep{ma2025star} have achieved strong results in image and video generation by structuring generation as hierarchical next-scale prediction across progressively refined semantic scales.
However, decomposing generation into discrete scales with varying granularities introduces a fundamental tension that compromises semantic alignment between the generated output and the input prompt. Such degradation stems from two compounding challenges: semantic errors within intermediate states are difficult to diagnose, and the unidirectional nature of \ar generation precludes revisitation or correction once such errors arise, allowing them to propagate across scales and degrade the quality of the final output.

Prior efforts to improve semantic alignment in \ar can be divided into two categories: 
(i) \textbf{Training-based alignment.} Frameworks such as T2I-R1~\citep{jiang2025t2ir1} fine-tune the generator with reinforcement learning to reshape its output distribution toward prompt semantics, yielding measurable improvements in semantic alignment. However, they require large-scale training on massive high-quality data, demanding substantial computational resources and limiting their applicability in the real world.
(ii) \textbf{Training-free alignment.}
This category includes prompt optimization before generation and output refinement after generation. 
For instance, Promptist~\citep{hao2023promptist} and OPT2I~\citep{manas2024opt2i} refine the input prompt to provide a stronger initial condition, while PARM and PARM++~\citep{guo2025parm} score generated candidates and iteratively regenerate them until alignment improves. 
However, existing training-free alignment methods operate only at the beginning or the end of the generation process, neglecting intermediate generation states and leaving semantic errors undiagnosed until they accumulate into the final output.

To address this gap, we focus on training-free paradigms and propose \ours, a framework that integrates multi-modal large language model (MLLM) feedback into the \ar sampling loop for in-generation semantic correction. 
Intuitively, the \ar, like a sketching artist, should gaze at its evolving draft and revise it before adding finer detail. 
Concretely, at selected intermediate scales, GAZER lets the model gaze at its own generation through an MLLM, extracting a diagnostic signal that drives the next sampling step toward the target semantics. 
As a result, GAZER transforms a passive sampler into a generator that revises its own draft via MLLM feedback, improving semantic alignment at inference time.

As illustrated in Figure~\ref{fig:method}, GAZER consists of two cooperating stages: 
(i) \textbf{\stageone}. Intermediate generation states cannot be reliably diagnosed, since they do not yet carry the coherent visual semantics required for accurate diagnosis. To this end, the \stageone stage constructs rollout previews at selected intermediate scales, transforming incomplete states into semantically readable visual predictions that enable the \ar to reflect on its evolving generation via an MLLM. Based on these previews, the MLLM produces semantic evaluations and corrective cues for subsequent refinement.
(ii) \textbf{\stagetwo}. With these corrective cues from \stageone, the remaining challenge is how to incorporate semantic correction into a unidirectional \ar trajectory without retraining the generator or breaking the structure already produced. To address this, the \stagetwo stage rewinds the generation to the previous scale and rectifies the generation trajectory by resampling from the rewound scale under the enhancement and suppression cues, redirecting the trajectory toward the target prompt. Together, the two stages enable in-generation semantic correction for existing \ar.

Our contributions are summarized as follows:
\begin{itemize}[leftmargin=1.6em, topsep=0pt, parsep=0pt, partopsep=0pt, itemsep=0.35em]%
    \item We propose \ours, a training-free framework that mitigates the accumulation and propagation of semantic errors across scales in \ar by introducing MLLM-driven diagnosis and correction during sampling.
    \item We design two stages for in-generation semantic correction in \ar: \stageone constructs rollout previews from intermediate states to enable MLLM feedback, and \stagetwo turns that diagnosis into a trajectory-level correction without retraining.
    \item Experiments on image and video compositional benchmarks show that \ours improves semantic alignment and compositional accuracy across multiple \ars.
\end{itemize}

%% file: 2_related_work.tex
\noindent \textbf{Autoregressive Visual Generation.}
Autoregressive (AR) visual generation spans several prediction paradigms~\citep{yan2021videogpt, wu2021nuwavisualsynthesispretraining, villegas2022phenakivariablelengthvideo, kondratyuk2024videopoetlargelanguagemodel, wu2022nuwainfinityautoregressiveautoregressivegeneration}, including next-token (raster-scan) prediction (Parti~\citep{yu2022parti}, LlamaGen~\citep{sun2024llamagen}), masked parallel decoding (MaskGIT~\citep{chang2022maskgit}, MAGE~\citep{li2023mage}), continuous-token AR (MAR~\citep{li2024mar}), and next-scale prediction (VAR~\citep{tian2024var}).
The next-scale family, extended by Infinity~\citep{han2025infinity}, InfinityStar~\citep{liu2025infinitystar}, HART~\citep{tang2024hart}, and STAR~\citep{ma2025star}, predicts visual tokens scale by scale and refines a low-resolution layout into fine-grained detail.
A shared property of this family is that sampling proceeds strictly from coarse to fine in one direction, so semantic decisions made at early scales are carried into all subsequent scales without revisiting.
\ours targets the next-scale AR family, introducing training-free in-generation semantic correction directly into the sampling process.


\noindent \textbf{Semantic Alignment in Visual Generation.} 
Semantic alignment has been explored in diffusion~\citep{chefer2023attendexcite, feng2023structurediffusion, rassin2023syngen, lv2025ppad}, language generation~\citep{madaan2023selfrefine, shinn2023reflexion}, and visual evaluation~\citep{kirstain2023pickscore, lin2024vqascore}, but semantic alignment in \ar has received comparatively less attention.
Efforts to improve semantic alignment in \ar fall into training-based and training-free approaches.

\noindent \textit{Training-based alignment.}
Training-based methods improve alignment by reshaping the output distribution of \ar through fine-tuning or reward optimization.
T2I-R1~\citep{jiang2025t2ir1} fine-tunes the generator with RL on compositional rewards, Diffusion-DPO~\citep{wallace2024diffusiondpo} and DDPO~\citep{black2024ddpo} adapt preference optimization to diffusion, and ImageReward~\citep{xu2023imagereward} and HPS~\citep{wu2023hpsv2} provide reward models from human preferences. 
However, these methods demand substantial computational resources and large-scale preference data, limiting their practical applicability.

\noindent \textit{Training-free alignment.}
Training-free methods avoid retraining and operate entirely at inference time, but differ in when the alignment signal is applied.
(i) Pre-generation methods condition on a refined input before sampling begins. 
Prompt-side methods such as Promptist~\citep{hao2023promptist}, an RL-trained rewriter, and OPT2I~\citep{manas2024opt2i}, which iteratively refines the prompt with an LLM, supply a stronger conditioning signal before sampling.
However, they cannot detect or respond to semantic drift that emerges during sampling.
(ii) Post-generation methods instead evaluate or refine completed outputs after sampling.
After sampling, PARM and PARM++~\citep{guo2025parm} score completed candidates and trigger regeneration, and ReNO~\citep{eyring2024reno} optimizes the initial noise to maximize a reward.
However, correction operates outside the generation trajectory and cannot intervene where errors first arise.

\ours requires no parameter updates or preference data, distinguishing it from training-based alignment methods. Unlike prompt optimization and output refinement approaches that operate outside the generation trajectory, \ours introduces semantic correction directly during sampling, addressing errors before they propagate across scales.

%% file: 3_methods.tex
As illustrated in Figure~\ref{fig:method}, \ours applies a two-stage correction, namely \textbf{\stageone} (§\ref{sec:diagnosis}) and \textbf{\stagetwo} (§\ref{sec:correction}), at a small set of intermediate scales, without modifying any model components.
\stageone reflects on the current trajectory by constructing rollout previews, enabling semantic evaluation via MLLM consultation, and producing enhancement and suppression cues for subsequent correction.
\stagetwo rewinds the trajectory to an earlier scale and rectifies subsequent generation under enhancement and suppression cues, redirecting the trajectory toward improved semantic alignment before resuming standard scale-by-scale sampling.

\begin{figure*}[t]
    \centering
    \includegraphics[width=\textwidth]{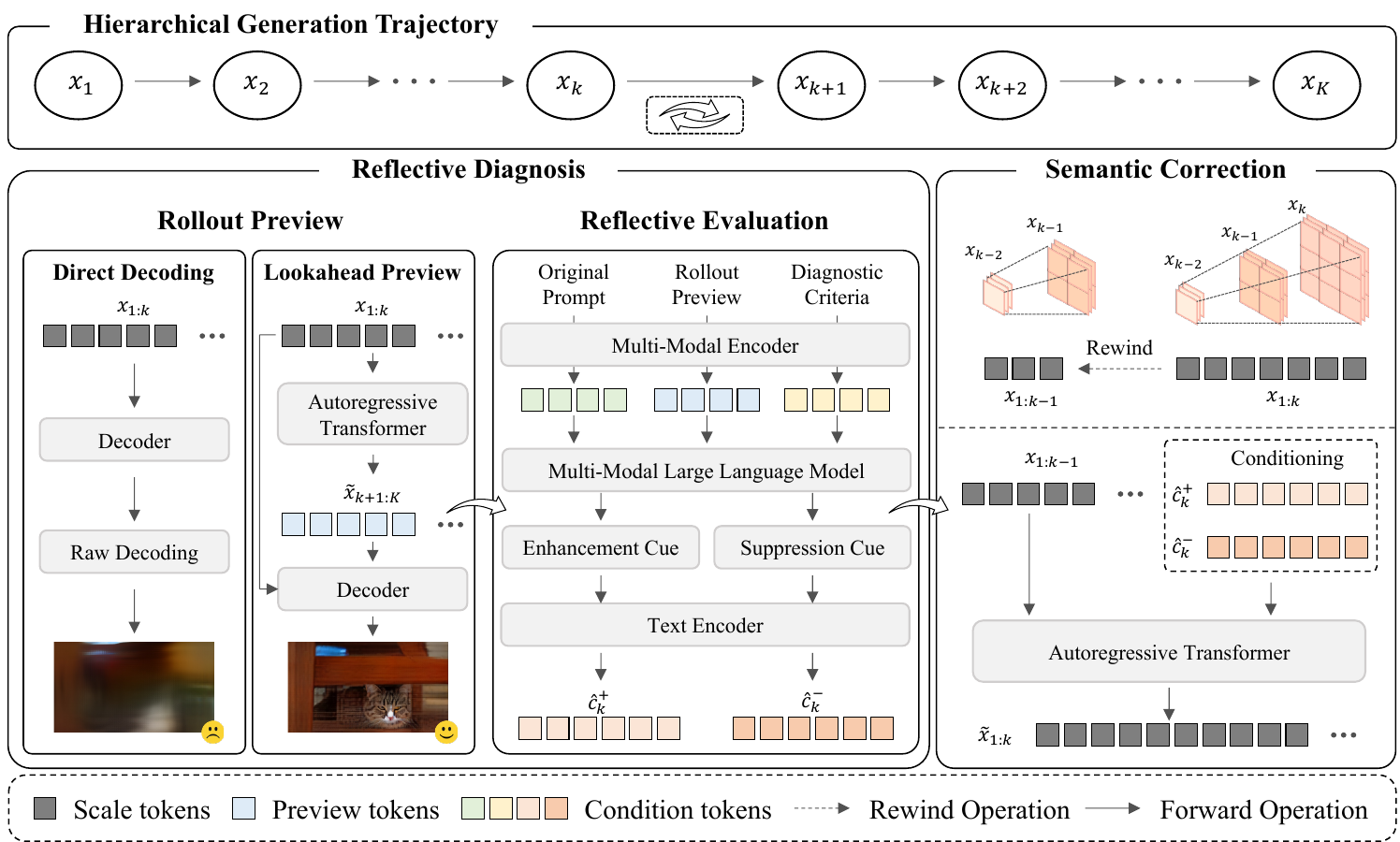}
    \caption{\textbf{Overview of \ours.}}
    \label{fig:method}
\end{figure*}


\subsection{Preliminaries}
\label{sec:prelim}
We apply \ours to \ar that predict visual tokens via next-scale prediction~\citep{tian2024var}.
Let $x_{1:K} = (x_1, \dots, x_K)$ denote the multi-scale token sequence produced by a backbone with parameters $\theta$. 
The backbone is conditioned on a text prompt $c$, which is first mapped to a continuous condition vector $\hat{c} = \tau_\phi(c)$ by a frozen text encoder $\tau_\phi$.
The backbone follows the next-scale autoregressive factorization, under which the joint distribution decomposes as
\begin{equation}
p_\theta(x_{1:K} \mid \hat{c}) \;=\; \prod_{k=1}^{K} p_\theta(x_k \mid x_{<k},\, \hat{c}),
\label{eq:ar}
\end{equation}
where each next-scale conditional $p_\theta(x_k \mid x_{<k}, \hat{c})$ predicts the tokens at scale $k$ from the preceding scales and the prompt.
Let $\mathcal{D}$ denote the decoder that maps tokens to a visual output (an image or a video), and $\mathcal{M}$ a pretrained MLLM that takes this visual output and a text query as input and returns natural-language feedback.
Sampling proceeds strictly from coarse to fine, and once $x_{<k}$ is sampled it remains fixed throughout the remaining generation, so semantic decisions made at early scales are not revisited.
Given a text prompt $c$, its encoding $\hat{c}$, and a pretrained next-scale AR backbone $p_\theta$ with decoder $\mathcal{D}$, our goal is to produce a visual output $\mathcal{D}(x_{1:K})$ whose semantics match $c$ as closely as possible, under the constraint that $\theta$, $\mathcal{D}$, and $\mathcal{M}$ remain frozen at inference.


\subsection{\stageone}
\label{sec:diagnosis}

Directly feeding the partial token map $\mathcal{D}(x_{1:k})$ to $\mathcal{M}$ is unreliable, as the decoded output at coarse scales lacks the high-frequency detail required for $\mathcal{M}$ to recognize objects, attributes, or relations.
To obtain a semantically diagnosable state, we construct a \emph{rollout preview} by extending the current trajectory to scale $K$ under the original sampling process. 
The resulting preview exposes the semantic tendency of the partially generated trajectory while preserving the macro structure already established, enabling reliable semantic diagnosis before the final output is produced.

Given $x_{1:k}$, we autoregressively sample the remaining tokens under $p_\theta$,
\begin{equation}
\tilde{x}_{k+1:K} \;\sim\; \prod_{j=k+1}^{K} p_\theta\bigl(\tilde{x}_j \mid x_{1:k}, \tilde{x}_{k+1:j-1}, \hat{c}\bigr),
\label{eq:rollout}
\end{equation}
and decode the preview $\tilde{I} = \mathcal{D}(x_{1:k}, \tilde{x}_{k+1:K})$.
$\tilde{I}$ preserves the macro structure already committed in $x_{1:k}$ while exposing the semantic consequence that the current trajectory would produce under unaltered sampling.
Because the preview follows the same autoregressive factorization as the final generation, the semantic feedback returned by $\mathcal{M}$ reflects the future semantic tendency of the current trajectory rather than an arbitrary completion, allowing for diagnostics of semantic errors before they propagate across scales.

We reflect on the current trajectory through the rollout preview $\tilde{I}$ with $\mathcal{M}$ under the original prompt $c$, asking whether each compositional element of $c$ (e.g., objects, attributes, and relations) is faithfully reflected in the emerging semantics.
The reflection produces two forms of corrective feedback: an enhancement cue $c_k^+$ that reinforces missing or weakly expressed semantics, and a suppression cue $c_k^-$ that identifies inconsistent attributes or undesired visual patterns.
Both cues are encoded by $\tau_\phi$, yielding $\hat{c}^+_{k} = \tau_\phi(c^+_{k})$ and $\hat{c}^-_{k} = \tau_\phi(c^-_{k})$, before being passed to \stagetwo for trajectory rectification.
When $c_k^+ \equiv c$ and $c_k^- = \emptyset$, no corrective signal is introduced, recovering the sampling behavior of the original backbone.


\subsection{\stagetwo}
\label{sec:correction}

Given the cues from \stageone, correcting the trajectory requires re-sampling $x_k$ itself, since the autoregressive factorization conditions every subsequent scale on $x_{1:k}$, preventing later scales from removing semantic errors already committed at $x_k$.

We therefore rewind the trajectory to $x_{1:k-1}$ and re-sample $x_k$ under the enhancement and suppression cues, steering the trajectory toward the target semantics while preserving the macro structure committed in $x_{1:k-1}$. 
Formally, the rewind operation is defined as
\begin{equation}
\mathcal{R}_{k}\bigl(x_{1:k}\bigr) \;=\; x_{1:k-1},
\label{eq:rewind}
\end{equation}
which removes the semantic commitment at $x_k$ and restores the trajectory to $x_{1:k-1}$, leaving the preceding prefix intact.

We realize semantic steering through classifier-free guidance~\citep{ho2022cfg} (CFG), treating corrective cues as additional conditions derived from diagnosis of the current trajectory rather than fixed prompts.
Let $\ell_\theta(\cdot \mid x_{<k}, \hat{c})$ denote the next-scale logits produced by the backbone under text condition $\hat{c}$, which may comprise multiple prompts attended to jointly through cross-attention.
Let$ \ell^+_{k} \equiv \ell_\theta(\cdot \mid x_{<k}, \{\hat{c}, \hat{c}^+_{k}\}), $ and $ \ell^-_{k} \equiv \ell_\theta(\cdot \mid x_{<k}, \hat{c}^-_{k}), $
denote the logits under the positive and negative conditions, respectively.
The re-sampling of $k$ takes the form
\begin{equation}
x_{k} \;\sim\; \mathrm{softmax}\bigl(\ell^-_{k} + \omega\,(\ell^+_{k} - \ell^-_{k})\bigr),
\label{eq:resample}
\end{equation}
where $\omega \ge 1$ is the CFG guidance scale.
Although Eq.~\eqref{eq:resample} follows the standard CFG formulation, the conditioning cues are generated dynamically from MLLM diagnosis of the current trajectory.

After re-sampling $x_{k}$, the backbone resumes standard scale-by-scale sampling for $j \in \{k+1, \dots, K\}$ under unmodified $p_\theta(\cdot \mid x_{<j}, \hat{c})$.
Since every subsequent scale conditions on $x_{1:k}$, rectifying $x_k$ implicitly propagates the semantic correction to all later scales.
Semantic inconsistencies may nevertheless emerge progressively and accumulate across scales along the trajectory, so a single correction step is insufficient to resolve all semantic drift.
\ours therefore applies diagnosis and correction at multiple scales along the trajectory, defined by the intervention scale set:
\begin{equation}
    \mathcal{S} =
    \bigl\{k \big|
        k = \kappa_s + i\cdot\Delta,\
        i = 0, 1, \dots,
        \bigl\lfloor
            \tfrac{\kappa_e - \kappa_s}{\Delta}
        \bigr\rfloor
    \bigr\},
    \label{eq:scale_set}
\end{equation}
where $\kappa_s = \lfloor r_s \cdot K \rfloor$ and $\kappa_e = \lfloor r_e \cdot K \rfloor$ for normalized ratios $r_s, r_e \in [0, 1)$ with $r_s < r_e$, and $\Delta \ge 1$, jointly determining where diagnosis begins, where it ends, and how frequently it is triggered along the trajectory for semantic correction.
 
\subsection{Theoretical Foundations}
\label{sec:theory}

We provide theoretical guarantees that jointly justify the design of \ours.

\begin{assumption}[Local Semantic Refinement]
\label{ass:refinement}
For all scales $k' > k$, the conditional distribution
$p_\theta(x_{k'} \mid x_{<k'}, \hat{c})$
performs only local refinement and introduces no semantic factors outside
the coarse semantic support of the committed prefix.
Formally,
\begin{equation}
    \phi(x_{1:k'})
    \subseteq
    \phi(x_{1:k})
    \qquad
    \forall\, k' > k.
\end{equation}
\end{assumption}

\begin{theorem}[Semantic Commitment]
\label{thm:commitment}
Under Assumption~\ref{ass:refinement},
if a semantic factor $s$ is absent from the committed prefix
$\phi(x_{1:k})$, then $s$ cannot appear in any continuation
$x_{k+1:K} \sim \prod_{k'=k+1}^{K}
p_\theta(x_{k'} \mid x_{<k'}, \hat{c})$.
Formally,
\begin{equation}
    s \notin \phi(x_{1:k})
    \;\Rightarrow\;
    s \notin \phi(x_{1:K}).
\end{equation}
\end{theorem}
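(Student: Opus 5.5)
The plan is to argue by contraposition and reduce the statement to a single instance of Assumption~\ref{ass:refinement}, namely the case $k' = K$. Fix the committed prefix $x_{1:k}$ and an arbitrary continuation $x_{k+1:K}$ drawn from $\prod_{k'=k+1}^{K} p_\theta(x_{k'} \mid x_{<k'}, \hat{c})$. The one point to settle first is how to read the assumption. Its hypothesis is stated for the conditionals $p_\theta(x_{k'} \mid x_{<k'}, \hat{c})$ with $k'>k$. So the inclusion $\phi(x_{1:k'}) \subseteq \phi(x_{1:k})$ should hold for every sequence in the support of the continuation distribution, that is, for every sampled continuation and not merely in expectation. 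I will state this reading explicitly before invoking it.

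With that in place, there are two steps. First, if $K = k$ there is nothing to prove, since $\phi(x_{1:K}) = \phi(x_{1:k})$. Otherwise $K > k$, and applying the assumption with $k' = K$ gives $\phi(x_{1:K}) \subseteq \phi(x_{1:k})$ for the sampled trajectory. Second, suppose $s \in \phi(x_{1:K})$. The inclusion then forces $s \in \phi(x_{1:k})$, which contradicts $s \notin \phi(x_{1:k})$. Hence $s \notin \phi(x_{1:K})$. Because the continuation was arbitrary, the conclusion holds almost surely under the sampling distribution.

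I will also add a remark that does not depend on using $k'=K$ directly. If the assumption is read only as a one-step property, $\phi(x_{1:j+1}) \subseteq \phi(x_{1:j})$ for each $j \ge k$, a short induction on $j$ chains these inclusions into $\phi(x_{1:K}) \subseteq \phi(x_{1:k})$. This also shows that every intermediate scale inherits the exclusion of $s$, which is what motivates rewinding in Eq.~\eqref{eq:rewind}: only re-sampling a scale at or before $k$ can introduce $s$.

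The main obstacle is not technical but definitional. $\phi$ is never formally specified, and the assumption is quantified over scales while the theorem concerns sampled trajectories. The care in the proof therefore goes into fixing $\phi$ as a map from token prefixes to sets of semantic factors, and into making the quantifier "for every continuation in the support" explicit. Once that is done, the result is an immediate set-inclusion argument.
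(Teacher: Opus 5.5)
Your proposal is correct and matches the paper's proof, which runs exactly the one-step induction from your remark: base case $k'=k+1$, then chaining $\phi(x_{1:k'+1}) \subseteq \phi(x_{1:k'})$ up to $K$ to get $\phi(x_{1:K}) \subseteq \phi(x_{1:k})$, and concluding as you do. Your main line, which instantiates Assumption~\ref{ass:refinement} directly at $k'=K$, is a shortcut showing the induction is unnecessary under the assumption as literally stated, while your remark covers the one-step reading that the paper's inductive step implicitly uses.
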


\begin{theorem}[Rewind Expands Reachable Semantic Support]
\label{thm:support}
Under Assumption~\ref{ass:refinement}, define the reachable semantic support of a prefix $x_{1:k}$ as
\begin{equation}
    \mathcal{S}(x_{1:k})
    =
    \Bigl\{
    s \;\Bigm|\;
    \exists\, x_{k+1:K},
    \;
    s \in \phi(x_{1:K})
    \Bigr\}.
    \label{eq:support}
\end{equation}
\begin{enumerate}[label=(\roman*)]
  \item If $s \notin \mathcal{S}(x_{1:k})$, then no intervention confined to scales $k > k$ satisfies $s \in \phi(x_{1:k}, x'_{k+1:K})$.
  \item If the rewind operator $\mathcal{R}^{(1)}_{k}$ is applied and a replacement sample $x'_{k}$ satisfies $s \in \phi(x_{1:k-1}, x'_{k})$, then $s \in \mathcal{S}(x_{1:k-1}, x'_{k})$.
\end{enumerate}
\end{theorem}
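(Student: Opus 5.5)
The argument is short and largely set-theoretic: both parts should follow from Assumption~\ref{ass:refinement} and Theorem~\ref{thm:commitment}, once the definition of $\mathcal{S}(\cdot)$ in Eq.~\eqref{eq:support} is read carefully. Here $\phi$ maps a (partial) token sequence to its set of semantic factors, and the existential in Eq.~\eqref{eq:support} ranges over continuations reachable from the prefix. Because of the typo in (i), I read ``scales $k>k$'' as ``scales $k'>k$''. That is, an intervention keeps $x_{1:k}$ fixed and replaces only $x_{k+1:K}$ by an arbitrary $x'_{k+1:K}$, for instance one produced by modified CFG logits. I also read $\mathcal{R}^{(1)}_k$ as the rewind $\mathcal{R}_k$ of Eq.~\eqref{eq:rewind}.

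For part (i), I would argue by contrapositive straight from the definition. Suppose some intervention confined to scales $k'>k$ produced $x'_{k+1:K}$ with $s\in\phi(x_{1:k},x'_{k+1:K})$. Then $x'_{k+1:K}$ witnesses the existential in Eq.~\eqref{eq:support}, so $s\in\mathcal{S}(x_{1:k})$, contradicting the hypothesis. For this step to be airtight, the existential in Eq.~\eqref{eq:support} has to range over all continuations, not just those in the support of $p_\theta$. I would state that reading explicitly, since interventions such as guidance with corrective cues may leave the support of $p_\theta(\cdot\mid\cdot,\hat c)$. I would also add a remark connecting (i) to Theorem~\ref{thm:commitment}. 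If the existential were restricted to model samples, Assumption~\ref{ass:refinement} would give $\mathcal{S}(x_{1:k})\subseteq\phi(x_{1:k})$, so for samples from the model ``$s\notin\mathcal{S}$'' is just ``$s\notin\phi(x_{1:k})$''. In that reading, Theorem~\ref{thm:commitment} already yields (i).

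For part (ii), after rewinding to $x_{1:k-1}$ and drawing $x'_k$, the new prefix is $y_{1:k}=(x_{1:k-1},x'_k)$, and by hypothesis $s\in\phi(y_{1:k})$. To show $s\in\mathcal{S}(y_{1:k})$, I need one continuation $y_{k+1:K}$ with $s\in\phi(y_{1:K})$. Assumption~\ref{ass:refinement} gives only the inclusion $\phi(y_{1:K})\subseteq\phi(y_{1:k})$, which points the wrong way. This is the main obstacle: the assumption forbids \emph{gaining} factors but does not forbid \emph{losing} them.

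To get around it, I would first try a trivial continuation. Pick $y_{k+1:K}$ to be pure refinement of $y_{1:k}$, for example zero residuals in the VAR residual-quantization parametrization, so that the decoded content equals that of the prefix and $\phi(y_{1:K})=\phi(y_{1:k})\ni s$. If that construction cannot be justified within the paper's abstractions, I would add a mild companion condition of semantic persistence: for every prefix there exists at least one continuation preserving $\phi$. Under that condition (ii) is immediate. The last step is to contrast (i) and (ii): $s$ may be unreachable from $x_{1:k}$, yet reachable from the rewound and resampled prefix. Only rewinding can enlarge the reachable support, and this is the design rationale for \stagetwo.
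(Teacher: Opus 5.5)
Your part (i) is the paper's argument. Any intervention $x'_{k+1:K}$ with $s\in\phi(x_{1:k},x'_{k+1:K})$ is itself a witness for the existential in Eq.~\eqref{eq:support}. The paper also lets that existential range over all continuations: it writes $\mathcal{S}(x_{1:k})$ as a union over all $x'_{k+1:K}$. So your reading is the intended one.

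For part (ii), you and the paper reach for the same witness, a refinement-only continuation of the resampled prefix $(x_{1:k-1},x'_k)$. The paper justifies it by citing $\phi(x_{1:k-1},x'_k,x'_{k+1:K})\subseteq\phi(x_{1:k-1},x'_k)$ and concluding that $s$ ``persists'' in the final support. That is exactly the wrong-direction inference you flagged. Its other option, the empty continuation, only covers the degenerate case $K=k$.

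Your diagnosis is right, and it goes beyond the write-up: for $K>k$, part (ii) does not follow from Assumption~\ref{ass:refinement} at all. Set $\phi(x_{1:j})=\{s\}$ for every prefix with $j<K$, and $\phi(x_{1:K})=\emptyset$ for every full sequence. This satisfies the assumption, yet makes $\mathcal{S}(x_{1:k-1},x'_k)$ empty. So an extra hypothesis is unavoidable. Your persistence condition (every prefix $y_{1:k}$ admits a continuation with $\phi(y_{1:K})\supseteq\phi(y_{1:k})$) is the natural choice, and it is what the paper uses tacitly. Its weakest per-factor form, $\phi(y_{1:k})\subseteq\mathcal{S}(y_{1:k})$ for all prefixes, is literally statement (ii). So (ii) is best presented as a consequence of an explicitly added axiom.

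Commit to that axiom rather than to the zero-residual construction. The paper never ties $\phi$ to the decoded output, and a zero codeword need not exist (bitwise $\pm1$ quantizers, for instance, have none).

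One correction to your side remark in (i). From $\mathcal{S}(x_{1:k})\subseteq\phi(x_{1:k})$ you only get that $s\notin\phi(x_{1:k})$ implies $s\notin\mathcal{S}(x_{1:k})$. That is the wrong direction for turning the hypothesis $s\notin\mathcal{S}(x_{1:k})$ into the hypothesis of Theorem~\ref{thm:commitment}. The equivalence you assert needs the reverse inclusion $\phi(x_{1:k})\subseteq\mathcal{S}(x_{1:k})$, which is your persistence condition again. Since (i) is purely definitional, drop that remark or restate it under persistence.
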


\begin{remark}[Implication for Semantic Correction]
Theorems~\ref{thm:commitment} and~\ref{thm:support} together imply that correcting semantic failures requires rewinding to an earlier scale where the desired semantic factors remain reachable, motivating the rewind-and-resample mechanism of \stagetwo.
The proof is provided in Appendix \ref{app:proofs}.
\end{remark}

%% file: 4_experiments.tex
\input{tables/t2i_main}

\input{tables/t2v_main}

\subsection{Experimental Setup}
\label{sec:exp_setup}

\paragraph{Tasks and benchmarks.}
We evaluate \ours on both text-to-image and text-to-video generation to examine whether the proposed training-free intervention generalizes across \ar visual generation settings.
For text-to-image generation, we use T2I-CompBench~\citep{huang2025t2icompbenchpp}, which covers color, shape, texture, 2D-spatial relationships, 3D-spatial relationships, non-spatial relations, numeracy, and complex compositional prompts.
For text-to-video generation, we use T2V-CompBench~\citep{sun2025t2vcompbench}, which evaluates consistent attributes, dynamic attributes, spatial relations, motion, action binding, object interactions, and video-level numeracy.
These benchmarks allow us to measure whether in-generation diagnosis and correction improve fine-grained text-visual semantic alignment rather than only visual fidelity.

\paragraph{Models and baselines.}
We apply \ours to three \ars: InfinityStar~\citep{liu2025infinitystar}, Helios~\citep{yuan2026heliosrealrealtimelong}, and STAR~\citep{ma2025star}, where InfinityStar supports both tasks.
For text-to-image generation, we evaluate InfinityStar and STAR, using each unmodified model as the paired baseline.
For text-to-video generation, we evaluate InfinityStar and Helios against their respective unmodified baselines.
For each model, the baseline and \ours share the same tokenizer, decoder, model parameters, and base sampling configuration.
The setup isolates and evaluates the effect of introducing in-generation diagnosis and correction into the inference-time trajectory.

\paragraph{Implementation details.}
All experiments are conducted on NVIDIA A100 GPUs.
By default, we set the diagnosis interval to $\kappa_s=0$ and $\kappa_e=1$, and set the scale step to $\Delta=4$, which triggers diagnosis and correction along the full coarse-to-fine trajectory.
We use Qwen3-VL~\citep{bai2025qwen3} as the MLLM to evaluate rollout previews and generate enhancement and suppression cues.
To ensure a fair comparison, all models are evaluated in a training-free setting, and standard sampling and \ours use the same random seed and the same sampling configuration unless otherwise stated.

\paragraph{Evaluation metrics.}
T2I-CompBench and T2V-CompBench serve as the compositional benchmarks.
For text-to-image generation, we additionally report CLIP Score~\citep{radford2021clip}, PickScore~\citep{kirstain2023pickscore}, HPSv2~\citep{wu2023hpsv2}, and ImageReward~\citep{xu2023imagereward} to assess text-image similarity, human preference alignment.
All metrics are used to compare standard sampling and \ours under the same model.
All evaluators are used solely for evaluation purposes, and none serve as optimization objectives or reranking signals in either the baseline or \ours.

\input{tables/ablation}

\subsection{Main Results}
\label{sec:main_results}

\paragraph{Quantitative results.}
Tables~\ref{tab:t2i_main} and~\ref{tab:t2v_main} report the main results of \ours on T2I-CompBench and T2V-CompBench.
In general, \ours improves the compositional alignment across both benchmarks and across the evaluated models.
In T2I-CompBench, InfinityStar shows notable gains in shape, texture, and numeracy, while \ours in STAR improves in color, shape, texture, 3D-spatial relationships, numeracy, and complex prompts.
In T2V-CompBench, both InfinityStar and Helios show consistent gains in numeracy, spatial relations, and object interaction.
These results are consistent with the hypothesis that rollout previews enable early detection of semantic errors, and that rewind-and-resample under corrective cues can redirect the trajectory toward the target semantics before errors propagate across scales.

\ours also improves relational compositionality.
The gains are particularly visible in video generation, where spatial relations, action binding, and object interactions require consistency beyond a single frame.
For InfinityStar, \ours improves spatial relations and object interactions.
For Helios, the strongest gains appear in action binding and object interactions.
This pattern is consistent with the design of \ours, which targets and corrects semantic errors at coarse scale before they propagate through the trajectory.

The improvement is not strictly monotonic on every metric.
STAR shows small decreases on 2D-spatial and non-spatial attributes, and InfinityStar shows a small decrease on dynamic attributes in T2V-CompBench.
These mixed cases indicate that the current intervention is most reliable when the rollout preview provides sufficient visual signal for diagnosing objects, attributes, relations, and numeracy.
Fine-grained dynamic attributes and some model-specific dimensions remain challenging under the current intervention design.
Nevertheless, the overall trend supports the effectiveness of training-free \stageone and \stagetwo for improving compositional alignment without updating the model.

\paragraph{Qualitative results.}
Figure~\ref{fig:qualitative} provides qualitative comparisons on both text-to-image and text-to-video prompts.
In image generation, standard sampling often produces visually plausible images but may omit objects, confuse attributes, or bind an attribute to the wrong entity.
\ours exposes such errors through rollout previews and uses the resulting cues to strengthen missing elements and suppress conflicting ones during re-sampling.
In video generation, the baseline errors are not limited to individual frames and can propagate into incorrect actions or incoherent object interactions.
The examples show that \ours yields more stable subjects, more accurate action relationships, and more coherent interactions, which agrees with the improvements observed on T2V-CompBench.
These cases indicate that the main benefit of \ours lies in correcting semantic misalignments with respect to the target prompt, rather than overall visual quality enhancement.
Additional qualitative examples are provided in Appendix~\ref{app:qualitative}.

\subsection{Ablation and Hyperparameter Analysis}
\label{sec:ablation_schedule}

\paragraph{Ablation study.}
Table~\ref{tab:ablation} analyzes the key components of the two-stage design on InfinityStar with T2I-CompBench.
The evaluated components include \stagetwo for cue-guided re-sampling after trajectory rewind, MLLM-guided diagnosis in \stageone for identifying semantic mismatches from intermediate generation states, and rollout preview construction in \stageone for exposing a semantically readable visual proxy to the MLLM.

The ablation shows that \stagetwo alone is insufficient for stable compositional improvement.
It improves several appearance-related dimensions, but its effect on spatial relationships and numeracy is weaker, suggesting that correction requires a reliable diagnostic signal.
Notably, adding diagnosis without preview can degrade preformance on certain dimensions, suggesting that diagnosis on incomplete intermerdiate states introduces unreliable cues.
The full system achieves the most stable overall improvement, especially on color, shape, texture, 2D-spatial relationships, non-spatial attributes, and numeracy.
This result supports the cooperative design of \ours: rollout preview provides readable evidence, MLLM-guided diagnosis converts that evidence into corrective cues, and correction steers the trajectory toward the target semantics under corrective cues.

\begin{figure}[t]
\centering
\includegraphics[width=\columnwidth]{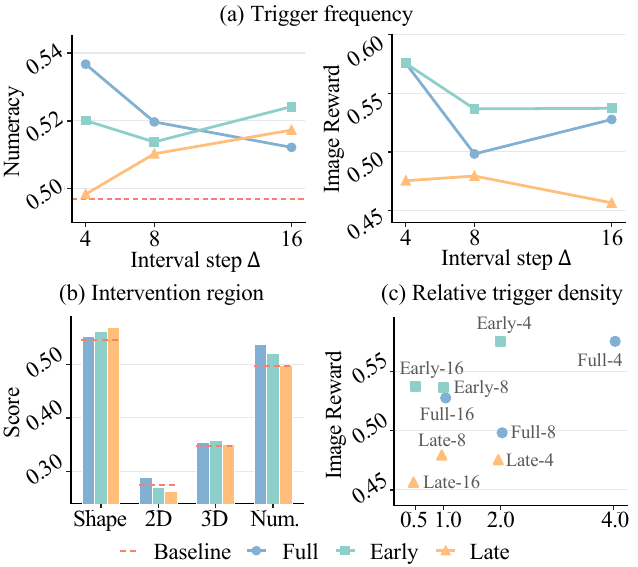}
\caption{
\textbf{Schedule behavior on InfinityStar with T2I-CompBench.}
Full, Early, and Late denote $[r_s,r_e]=[0,1]$, $[0,0.5]$, and $[0.5,1]$, respectively.
(a) shows the effect of trigger frequency $\Delta$.
(b) compares high-frequency intervention regions.
(c) relates relative trigger density to ImageReward.
}
\label{fig:schedule_main}
\end{figure}

\paragraph{Intervention schedule.}
We study how the intervention range $(\kappa_s, \kappa_e)$ and trigger interval $\Delta$ affect compositional generation on InfinityStar with T2I-CompBench, with representative trends summarized in Figure~\ref{fig:schedule_main}.
Smaller $\Delta$ consistently improves numeracy and ImageReward across all intervention regions (Figure~\ref{fig:schedule_main}a), and full-range intervention achieves more balanced gains across compositional dimensions than early or late intervention alone (Figure~\ref{fig:schedule_main}b).
Higher trigger density is further associated with better ImageReward, with early intervention dominating late intervention at comparable density (Figure~\ref{fig:schedule_main}c).
We therefore adopt the full-range $\Delta=4$ schedule as the default setting, which provides the strongest overall balance at a moderate inference overhead.
Full schedule results are provided in Appendix~\ref{appendix:schedule}.

\input{tables/budget}

\subsection{Efficiency Analysis}
\label{sec:efficiency}

\ours introduces additional inference-time overhead from rollout preview construction and MLLM-based diagnosis, but the overall cost remains below that of Best-of-2 sampling.
As shown in Table~\ref{tab:budget}, \ours requires $1.63\times$ the inference time of standard sampling, compared with $2.00\times$ for Best-of-2, while achieving higher compositional consistency and human preference alignment.
This indicates that the performance gain stems from targeted semantic correction rather than increased sampling diversity.
The model memory footprint remains unchanged; the additional overhead stems entirely from the MLLM loaded during diagnosis.
Full metric breakdowns and memory statistics are provided in Appendix~\ref{appendix:efficiency}.

%% file: tables/t2i_main.tex
\begin{table*}[t]
\centering
\small
\setlength{\tabcolsep}{6pt}
\begin{tabular}{llcccccccc}
\toprule
Model & Method & Color & Shape & Texture & 2D Spatial & 3D Spatial & Non-spatial & Numeracy & Complex \\
\midrule
\multirow{2}{*}{InfinityStar}
& Standard & 0.8121 & 0.5459 & 0.7384 & 0.2739 & 0.3466 & 0.2995 & 0.4970 & 0.3819 \\
& \ours & \textbf{0.8288} & \textbf{0.6348} & \textbf{0.7676} & \textbf{0.2760} & \textbf{0.3535} & \textbf{0.2998} & \textbf{0.5436} & \textbf{0.3862} \\
\midrule
\multirow{2}{*}{STAR}
& Standard & 0.5327 & 0.4270 & 0.5293 & \textbf{0.1952} & 0.3676 & \textbf{0.3110} & 0.4589 & 0.3418 \\
& \ours & \textbf{0.5543} & \textbf{0.4386} & \textbf{0.5299} & 0.1942 & \textbf{0.3808} & 0.3109 & \textbf{0.4680} & \textbf{0.3437} \\
\bottomrule
\end{tabular}%
\caption{\textbf{Main results on T2I-CompBench.} For each model, \ours is compared with the corresponding standard sampling baseline under the same model sampling configuration.}
\label{tab:t2i_main}
\end{table*}

%% file: tables/t2v_main.tex
\begin{table*}[t]
\centering
\small
\setlength{\tabcolsep}{4pt}
\begin{tabular}{llccccccc}
\toprule
Model & Method & Consistent Attr. & Dynamic Attr. & Spatial & Motion & Action Binding & Object Interaction & Numeracy \\
\midrule
\multirow{2}{*}{InfinityStar}
& Standard & 0.8657 & \textbf{0.0274} & 0.6248 & 0.3212 & 0.7258 & 0.6770 & 0.3408 \\
& \ours & \textbf{0.8826} & 0.0265 & \textbf{0.6614} & \textbf{0.3383} & \textbf{0.7285} & \textbf{0.7169} & \textbf{0.3745} \\
\midrule
\multirow{2}{*}{Helios}
& Standard & 0.7550 & 0.0232 & 0.5484 & 0.2683 & 0.6496 & 0.5546 & 0.3203 \\
& \ours & \textbf{0.7875} & \textbf{0.0296} & \textbf{0.5857} & \textbf{0.2793} & \textbf{0.7091} & \textbf{0.6741} & \textbf{0.3508} \\
\bottomrule
\end{tabular}%
\caption{\textbf{Main results on T2V-CompBench.} \ours improves most video compositional dimensions, especially action binding, object interactions, and video-level numeracy.}
\label{tab:t2v_main}
\end{table*}

%% file: tables/ablation.tex
\begin{table*}[t]
\centering
\small
\setlength{\tabcolsep}{4pt}
\begin{tabular}{ccc|cccccccc}
\toprule
Correction & Diagnosis & Preview & Color & Shape & Texture & 2D Spatial & 3D Spatial & Non-spatial & Numeracy & Complex \\
\midrule
 &  &  & 0.8121 & 0.5459 & 0.7384 & 0.2739 & 0.3466 & 0.2995 & 0.4970 & 0.3819 \\
\checkmark &  &  & 0.8137 & 0.5633 & 0.7517 & 0.2604 & 0.3482 & 0.2984 & 0.4934 & 0.3874 \\
\checkmark & \checkmark &  & 0.8068 & 0.5439 & 0.7420 & 0.2230 & \textbf{0.3621} & 0.2995 & 0.5135 & \textbf{0.3919} \\
\checkmark & \checkmark & \checkmark & \textbf{0.8288} & \textbf{0.6348} & \textbf{0.7676} & \textbf{0.2760} & 0.3535 & \textbf{0.2998} & \textbf{0.5436} & 0.3862 \\
\bottomrule
\end{tabular}%
\caption{\textbf{Ablation study on InfinityStar with T2I-CompBench.} Preview denotes rollout preview construction and Diagnosis denotes MLLM-guided semantic diagnosis(\stageone). Correction denotes cue-guided re-sampling after trajectory rewind(\stagetwo). Each row progressively enables additional stages of the proposed framework.}
\label{tab:ablation}
\end{table*}

%% file: tables/budget.tex

\begin{table}[t]
\centering
\small
\setlength{\tabcolsep}{1.5pt}
\begin{tabular}{lcccc}
\toprule
Method & Time $\downarrow$ & Mem. $\downarrow$ & Consist. Attr. $\uparrow$ & IR $\uparrow$ \\
\midrule
Baseline & $1.00\times$ & 59.40 & 13.07 & 0.2811 \\
Baseline Best-of-2 & $2.00\times$ & 59.40 & 13.23 & 0.2802 \\
\ours & $\mathbf{1.63\times}$ & 77.44 & \textbf{14.20} & \textbf{1.1452} \\
\bottomrule
\end{tabular}%
\caption{
\textbf{Budget-aware comparison. }
\ours achieves higher compositional consistency and human preference alignment than Best-of-2 sampling, while requiring lower inference cost.
Memory is reported in GB.
Full metric breakdowns are provided in Appendix~\ref{appendix:efficiency}.
}
\vspace{-0.4cm}
\label{tab:budget}
\end{table}

%% file: 6_conclusion.tex
We presented \ours, a training-free framework for in-generation semantic correction in next-scale autoregressive visual generation, demonstrating that MLLM feedback can be effectively incorporated into the AR sampling process to diagnose and rectify semantic errors before they propagate across scales.
Experiments on T2I-CompBench and T2V-CompBench show consistent improvements in compositional alignment across image and video backbones without any additional training.
We hope this work offers a modest step toward in-generation correction as an underexplored direction for autoregressive visual generation, and may serve as a reference for future work on real-time semantic calibration during AR sampling, including more efficient feedback mechanisms, adaptive intervention schedules, and extensions to broader generative architectures.

%% file: 7_appendix.tex
\section{Proofs of Theorems}
\label{app:proofs}
\subsection{Proof of Theorem~\ref{thm:commitment}}
\label{sec:proof-commitment}

\begin{theorem}[Semantic Commitment, Restatement]
Under Assumption~\ref{ass:refinement}, if a semantic factor $s$ is absent from the committed prefix $\phi(x_{1:k})$, then $s$ cannot appear in any continuation $x_{k+1:K} \sim \prod_{k'=k+1}^{K} p_\theta(x_{k'} \mid x_{<k'}, \hat{c})$.
Formally,
\begin{equation}
    s \notin \phi(x_{1:k})
    \;\Rightarrow\;
    s \notin \phi(x_{1:K}).
\end{equation}
\end{theorem}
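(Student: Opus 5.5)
The plan is to argue by contraposition and apply Assumption~\ref{ass:refinement} directly at the final scale. Fix a committed prefix $x_{1:k}$ and any continuation $x_{k+1:K}$ drawn from $\prod_{k'=k+1}^{K} p_\theta(x_{k'} \mid x_{<k'}, \hat{c})$. Suppose, for contradiction, that some semantic factor $s$ with $s \notin \phi(x_{1:k})$ satisfies $s \in \phi(x_{1:K})$.

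If $K = k$, the claim is trivial. Otherwise $K > k$, and the containment $\phi(x_{1:K}) \subseteq \phi(x_{1:k})$ in Assumption~\ref{ass:refinement} with $k' = K$ gives $s \in \phi(x_{1:k})$. This contradicts the hypothesis and yields $s \notin \phi(x_{1:K})$.

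An alternative route, closer to the spirit of ``local refinement'', is induction on $k' = k+1, \dots, K$. The invariant is $s \notin \phi(x_{1:k'})$, and each step uses $\phi(x_{1:k'}) \subseteq \phi(x_{1:k'-1})$. That inclusion is the assumption applied with the prefix $x_{1:k'-1}$ in place of $x_{1:k}$, which is legitimate because the assumption quantifies over all pairs of scales. This version also shows that $s$ is absent at every intermediate scale, not only at $K$. That is useful later for part (i) of Theorem~\ref{thm:support}.

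The only delicate point is interpretive rather than technical. Assumption~\ref{ass:refinement} is stated for trajectories generated by $p_\theta(\cdot \mid x_{<k'}, \hat{c})$, so I must make sure it holds for \emph{every} sampled continuation, that is, almost surely or on the support of the sampling distribution. Then the conclusion holds for all continuations and not merely in expectation. I will make this explicit by reading the assumption as a pointwise statement over the support of $\prod_{k'>k} p_\theta$, after which the argument above is purely set-theoretic.
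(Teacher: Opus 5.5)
Your proof is correct, and your primary argument is shorter than the paper's. The paper's proof is your alternative route: induction over $k' = k+1,\dots,K$. Its base case is the assumption at $k'=k+1$. Its inductive step is the one-step inclusion $\phi(x_{1:k'+1}) \subseteq \phi(x_{1:k'})$, obtained by re-applying Assumption~\ref{ass:refinement} with $k'$ in the role of $k$. It ends by concluding $\phi(x_{1:K}) \subseteq \phi(x_{1:k})$.

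That final inclusion is exactly the $k'=K$ instance of the assumption, which you invoke directly. So, with the assumption as stated, the induction is redundant. Your one-line argument uses a single instance of the hypothesis for the given $k$, and never needs $k$ to be universally quantified.

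The induction earns its keep only under a weaker, genuinely local hypothesis $\phi(x_{1:j+1}) \subseteq \phi(x_{1:j})$ for each $j \ge k$. There, chaining the one-step inclusions is the whole content, and robustness to that weaker hypothesis is what the paper's route buys.

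One small correction to your remarks: the induction gives nothing extra about intermediate scales. The direct route also yields $s \notin \phi(x_{1:k'})$ for every $k < k' \le K$, by applying the assumption at that $k'$.

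Your reading of the assumption as holding pointwise on the support of the sampling distribution matches the paper's ``for any continuation'' phrasing.
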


We use the next-scale AR factorization in Eq.~\eqref{eq:ar} from Section~\ref{sec:prelim}, under which sampling proceeds strictly from coarse to fine and $x_{<k}$ remains fixed once sampled.
We further invoke the local-refinement assumption stated in the theorem: for all $k' > k$, the conditional $p_\theta(x_{k'} \mid x_{<k'}, \hat{c})$ introduces no semantic factors outside the support of $\phi(x_{1:k})$, i.e., $\phi(x_{1:k'}) \subseteq \phi(x_{1:k})$ for all $k' > k$.

We proceed by induction on $k'$ over $\{k+1, \dots, K\}$.

\textbf{Base case ($k' = k+1$).}
By the local-refinement assumption applied at $k' = k+1$,
\begin{equation}
    \phi(x_{1:k+1}) \;\subseteq\; \phi(x_{1:k}).
\end{equation}
Since $s \notin \phi(x_{1:k})$ by hypothesis, it follows that $s \notin \phi(x_{1:k+1})$.

\textbf{Inductive step.}
Suppose $s \notin \phi(x_{1:k'})$ for some $k < k' < K$.
Because $x_{<k'}$ remains fixed under the AR factorization in Eq.~\eqref{eq:ar} once committed, applying the local-refinement assumption at scale $k'+1$ gives
\begin{equation}
    \phi(x_{1:k'+1}) \;\subseteq\; \phi(x_{1:k'}),
\end{equation}
where the containment holds because $p_\theta(x_{k'+1} \mid x_{<k'+1}, \hat{c})$ introduces no semantic factors outside those already present in $\phi(x_{1:k'}) \supseteq \phi(x_{1:k})$.
Since $s \notin \phi(x_{1:k'})$, it follows immediately that
$s \notin \phi(x_{1:k'+1})$.

Applying the induction through all scales $k < k' \leq K$ yields
\begin{equation}
    \phi(x_{1:K}) \;\subseteq\; \phi(x_{1:k}),
\end{equation}
and therefore $s \notin \phi(x_{1:K})$ for any continuation
$x_{k+1:K} \sim \prod_{k'=k+1}^{K} p_\theta(x_{k'} \mid x_{<k'}, \hat{c})$.
$\square$

The semantic-commitment result shows that semantic factors absent at commitment scale $k$ are irrecoverable by any continuation under the local-refinement assumption, which motivates the rewind-and-resample mechanism in \stagetwo (Section~\ref{sec:correction}): a forward-only continuation cannot recover $s$, so \ours rewind to an earlier scale and re-sample $x_{k}$ under cue-guided conditioning.

\subsection{Proof of Theorem~\ref{thm:support}}
\label{sec:proof-support}

\begin{theorem}[Rewind Expands Reachable Semantic Support, Restatement]
Under Assumption~\ref{ass:refinement}, define the reachable semantic support of a prefix $x_{1:k}$ as
\begin{equation}
    \mathcal{S}(x_{1:k})
    =
    \Bigl\{
    s \;\Bigm|\;
    \exists\, x_{k+1:K},
    \;
    s \in \phi(x_{1:K})
    \Bigr\}.
\end{equation}
\begin{enumerate}[label=(\roman*)]
  \item If $s \notin \mathcal{S}(x_{1:k})$, then no intervention confined to scales $k' > k$ satisfies $s \in \phi(x_{1:k}, x'_{k+1:K})$.
  \item If the rewind operator $\mathcal{R}_{k}$ is applied and a replacement sample $x'_{k}$ satisfies
  $s \in \phi(x_{1:k-1}, x'_{k})$, then $s \in \mathcal{S}(x_{1:k-1}, x'_{k})$.
\end{enumerate}
\end{theorem}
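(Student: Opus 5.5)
Both parts follow almost directly from the definition of the reachable support in Eq.~\eqref{eq:support}, together with Assumption~\ref{ass:refinement}. I would treat them separately.

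\textbf{Part (i).} I would argue by contraposition, directly from the definition of $\mathcal{S}(x_{1:k})$. An intervention confined to scales $k'>k$ leaves the prefix $x_{1:k}$ untouched. It may use arbitrary cues, guidance, or even tokens chosen outside $p_\theta$, but whatever it does, it outputs some suffix $x'_{k+1:K}$.

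Suppose such an intervention produced $s\in\phi(x_{1:k},x'_{k+1:K})$. Then $x'_{k+1:K}$ would be a witness for the existential quantifier in Eq.~\eqref{eq:support}, so $s\in\mathcal{S}(x_{1:k})$, a contradiction.

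The one point to stress is that $\mathcal{S}$ quantifies over \emph{all} suffixes, not only those sampled from $p_\theta$. This is exactly what makes every possible intervention covered. As a consistency remark, I would note that Assumption~\ref{ass:refinement} combined with Theorem~\ref{thm:commitment} yields $\mathcal{S}(x_{1:k})\subseteq\phi(x_{1:k})$ whenever the suffix is generated by the refinement conditionals. This links part (i) back to Theorem~\ref{thm:commitment}: an absent factor is unreachable.

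\textbf{Part (ii).} After applying $\mathcal{R}_k$ (Eq.~\eqref{eq:rewind}) and re-sampling, the new prefix is $y_{1:k}=(x_{1:k-1},x'_k)$, and by hypothesis $s\in\phi(y_{1:k})$. To show $s\in\mathcal{S}(y_{1:k})$, I must exhibit one suffix $y_{k+1:K}$ with $s\in\phi(y_{1:K})$.

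The natural choice is the trivial refinement continuation. However, Assumption~\ref{ass:refinement} only gives the inclusion $\phi(y_{1:K})\subseteq\phi(y_{1:k})$, which points the wrong way: it does not guarantee that $s$ survives to scale $K$. This is the main obstacle. I would try to resolve it in one of two ways.

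\emph{First option.} Read the local-refinement assumption as preserving, rather than merely not adding, coarse semantics. In that case the inclusion becomes an equality along some continuation, and $s$ survives to scale $K$.

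\emph{Second option, which I prefer since the definition is existential.} Choose the suffix $y_{k+1:K}$ to be the identity or zero-residual refinement available in next-scale models. Each $y_{k'}$ then adds no new content, so $\phi(y_{1:K})=\phi(y_{1:k})\ni s$.

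In either case, $s\in\mathcal{S}(x_{1:k-1},x'_k)$ follows immediately. I would then close by contrasting the two parts: in part (i), $s$ lies outside $\mathcal{S}(x_{1:k})$ and no forward-only intervention can reach it, while in part (ii) it lies inside $\mathcal{S}(x_{1:k-1},x'_k)$. This shows that rewinding strictly enlarges the reachable semantic support.
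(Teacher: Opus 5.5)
Your part (i) coincides with the paper's proof. An intervention confined to scales above $k$ still outputs some suffix $x'_{k+1:K}$, which witnesses the existential in Eq.~\eqref{eq:support}, so $s\in\phi(x_{1:k},x'_{k+1:K})$ would force $s\in\mathcal{S}(x_{1:k})$.

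In part (ii), the obstacle you flag is a genuine gap, and neither of your options closes it from the stated hypotheses. Assumption~\ref{ass:refinement} only yields $\phi(y_{1:K})\subseteq\phi(y_{1:k})$, which allows every continuation to delete $s$. For a counterexample, set $\phi(z_{1:j})=\{s\}$ for $j\le k$ and $\phi(z_{1:j})=\emptyset$ for $j>k$, for all token sequences $z$. Then $\phi(z_{1:j'})\subseteq\phi(z_{1:j})$ holds for all $j<j'$, and $s\in\phi(x_{1:k-1},x'_k)$ holds. Yet $\mathcal{S}(x_{1:k-1},x'_k)=\emptyset$ whenever $K>k$. So (ii) does not follow from Assumption~\ref{ass:refinement}, and an extra hypothesis is unavoidable.

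Your first option is exactly such a hypothesis, since it replaces ``adds no factors'' by ``preserves factors''. Your second option is one too. It needs (a) an admissible suffix acting as an identity refinement, which is not generally available: a residual codebook need not contain a zero codeword, and sign-based bitwise quantizers cannot represent one. It also needs (b) invariance of $\phi$ under that suffix, which cannot be checked because $\phi$ is never defined. Rather than presenting either as a property of next-scale models, state a persistence hypothesis explicitly. For instance, assume every prefix $y_{1:k}$ admits at least one continuation $z_{k+1:K}$ with $\phi(y_{1:k})\subseteq\phi(y_{1:k},z_{k+1:K})$; then (ii) is one line.

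Do not expect the paper to supply the missing step. Its proof chooses the same witness (the trivial continuation, or any local-refinement continuation) and writes the same wrong-direction inclusion $\phi(x_{1:k-1},x'_k,x'_{k+1:K})\subseteq\phi(x_{1:k-1},x'_k)$. It then asserts that $s$ ``persists'', which does not follow; only the degenerate case $K=k$ is actually covered. Your diagnosis is sharper than the paper's argument, but under Assumption~\ref{ass:refinement} alone, part (ii) is proved by neither.

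Finally, your closing claim that rewinding ``strictly enlarges'' the reachable support overstates what (i) and (ii) give. The sets $\mathcal{S}(x_{1:k-1},x'_k)$ and $\mathcal{S}(x_{1:k})$ need not be nested. The correct comparison is $\mathcal{S}(x_{1:k})\subseteq\mathcal{S}(x_{1:k-1})$, with $x_{k:K}$ quantified in the latter, which is immediate from the definition. It is strict once (ii) supplies $s\in\mathcal{S}(x_{1:k-1},x'_k)\subseteq\mathcal{S}(x_{1:k-1})$.
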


We use the next-scale AR factorization in Eq.~\eqref{eq:ar} from Section~\ref{sec:prelim}, the rewind operator $\mathcal{R}_{k}$ defined in Eq.~\eqref{eq:rewind}, and the reachable semantic support $\mathcal{S}(\cdot)$ defined in Eq.~\eqref{eq:support}.
We further invoke the local-refinement assumption from Theorem~\ref{thm:commitment}: for all $k' > k$, each conditional $p_\theta(x_{k'} \mid x_{<k'}, \hat{c})$ introduces no semantic factors outside $\phi(x_{1:k})$.
By Theorem~\ref{thm:commitment}, the local-refinement assumption implies $\phi(x_{1:K}) \subseteq \phi(x_{1:k})$ for any continuation $x_{k+1:K} \sim \prod_{k'=k+1}^{K} p_\theta(x_{k'} \mid x_{<k'}, \hat{c})$.
We prove parts~(i) and~(ii) separately.

\medskip
\noindent\textbf{Proof of part~(i).}

Suppose for contradiction that there exists an intervention $x'_{k+1:K}$ confined to scales $k' > k$ such that $s \in \phi(x_{1:k}, x'_{k+1:K})$.
By definition of $\mathcal{S}$ in Eq.~\eqref{eq:support}, the intervention would require
\begin{equation}
    s \;\in\; \phi(x_{1:k}, x'_{k+1:K})
    \;\subseteq\;
    \bigcup_{x'_{k+1:K}} \phi(x_{1:K})
    \;=\;
    \mathcal{S}(x_{1:k}).
\end{equation}
The conclusion contradicts the hypothesis $s \notin \mathcal{S}(x_{1:k})$.
Therefore no intervention confined to scales $k' > k$ can introduce $s$ into the semantic content of the final output, regardless of how the continuation $x'_{k+1:K}$ is chosen or modulated.
\hfill$\square_{(\mathrm{i})}$

\medskip
\noindent\textbf{Proof of part~(ii).}

Apply $\mathcal{R}_{k}$ to the committed trajectory.
By Eq.~\eqref{eq:rewind}, the rewind operator discards $x_{k}$ while leaving $x_{1:k-1}$ unchanged:
\begin{equation}
    \mathcal{R}_{k}(x_{1:k}) \;=\; x_{1:k-1}.
\end{equation}
Let $x'_{k}$ be a replacement sample satisfying $s \in \phi(x_{1:k-1}, x'_{k})$ by hypothesis.
We must show $s \in \mathcal{S}(x_{1:k-1}, x'_{k})$, i.e., that there exists some continuation $x'_{k+1:K}$ such that $s \in \phi(x_{1:k-1}, x'_{k}, x'_{k+1:K})$.

Take the trivial continuation $x'_{k+1:K} = \emptyset$ (i.e., $K = k$), or, for $K > k$, any continuation that performs only local refinement.
Under the local-refinement assumption applied to the new prefix
$(x_{1:k-1}, x'_{k})$,
\begin{equation}
    \phi\bigl(x_{1:k-1}, x'_{k}, x'_{k+1:K}\bigr)
    \;\subseteq\;
    \phi\bigl(x_{1:k-1}, x'_{k}\bigr),
\end{equation}
where the containment follows from Theorem~\ref{thm:commitment} applied to the prefix $(x_{1:k-1}, x'_{k})$ in place of $x_{1:k}$.
Since $s \in \phi(x_{1:k-1}, x'_{k})$ by hypothesis, it follows that $s$ is not excluded by any subsequent local-refinement step.
The factor $s$ therefore persists in $\phi(x_{1:k-1}, x'_{k}, x'_{k+1:K})$ for any such continuation, which witnesses $s \in \mathcal{S}(x_{1:k-1}, x'_{k})$.
\hfill$\square_{(\mathrm{ii})}$

\medskip
Together, parts~(i) and~(ii) establish that a single-scale rewind followed by guided re-sampling is both necessary and sufficient to recover the reachable semantic support, providing the principled basis for \stagetwo.

\section{\ours Sampling Procedure}
\label{sec:appendix_algorithm}

Algorithm~\ref{alg:gazer} presents the complete \ours sampling loop, expanding the overview in Section~\ref{methods} with full algorithmic detail.

\begin{algorithm}[t]
\caption{\ours sampling loop}
\label{alg:gazer}
\begin{algorithmic}[1]
\REQUIRE $c,\,\tau_\phi,\, p_\theta,\, \mathcal{M},\, \mathcal{D},\,
         \kappa_s,\, \kappa_e,\, \Delta,\, \omega$
\ENSURE $I_K$
\STATE $\hat{c} \leftarrow \tau_\phi(c)$
\STATE $\mathcal{S} \leftarrow \{\kappa_s + i\cdot\Delta\}_{i=0}^{\lfloor(\kappa_e-\kappa_s)/\Delta\rfloor}$
\FOR{$k = 1, \dots, K$}
  \STATE $x_k \sim p_\theta(\cdot \mid x_{<k},\, \hat{c})$
  \IF{$k \in \mathcal{S}$}
    \STATE $\tilde{x}_{k+1:K}\!\sim\!
           \prod_{j=k+1}^{K}
           p_\theta(\cdot|x_{1:k},\tilde{x}_{k+1:j-1},\hat{c})$
           \hfill$\triangleright${Eq.~\eqref{eq:rollout}}
    \STATE $\tilde{I} \leftarrow \mathcal{D}(x_{1:k},\,\tilde{x}_{k+1:K})$
    \STATE $(c^+_k,\, c^-_k) \leftarrow \mathcal{M}(\tilde{I},\, c)$
    \IF{$c^+_k \neq c$ \textbf{or} $c^-_k \neq \emptyset$}
      \STATE $\hat{c}^+_k \leftarrow \tau_\phi(c^+_k);\quad \hat{c}^-_k \leftarrow \tau_\phi(c^-_k)$
      \STATE $x_{1:k-1} \leftarrow \mathcal{R}_k(x_{1:k})$
             \hfill$\triangleright${Eq.~\eqref{eq:rewind}}
      \STATE $\ell^+_k \leftarrow
             \ell_\theta(\cdot \mid x_{<k},\,
             \{\hat{c},\,\hat{c}^+_k\})$
      \STATE $\ell^-_k \leftarrow
             \ell_\theta(\cdot \mid x_{<k},\, \hat{c}^-_k)$
      \STATE $x_k \sim
             \mathrm{softmax}\bigl(
               \ell^-_k + \omega\,(\ell^+_k - \ell^-_k)
             \bigr)$
             \hfill$\triangleright${Eq.~\eqref{eq:resample}}
    \ENDIF
  \ENDIF
\ENDFOR
\RETURN $I_K \leftarrow \mathcal{D}(x_{1:K})$
\end{algorithmic}
\end{algorithm}

\section{Limitations On Next-Token Prediction Models}
\label{appendix:failure}

\begin{figure}[t]
\centering
\includegraphics[width=\columnwidth]{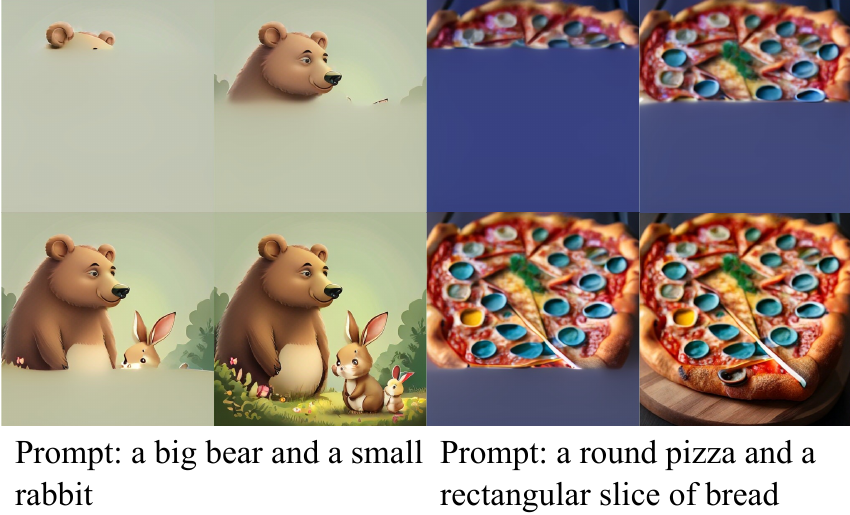}
\caption{\textbf{Failure mode.} Rollout previews at four intervention points on a next-token prediction model, where spatially truncated intermediate states prevent reliable MLLM diagnosis.}
\label{fig:failure}
\end{figure}

\ours relies on rollout previews to produce semantically readable intermediate states, but next-token prediction models generate tokens in raster-scan order, yielding spatially truncated partial images that lack global semantic structure and cannot support reliable MLLM diagnosis.

In next-scale prediction, each intermediate state corresponds to a complete but low-resolution image, so a rollout preview only needs to fill in finer details to produce a globally coherent output. In next-token prediction, the intermediate state covers only the upper portion of the image in raster-scan order, and the decoded output is spatially incomplete, leaving the MLLM unable to assess the overall semantic composition.
The rewind mechanism faces an analogous mismatch. Rewinding in next-scale prediction discards the last scale while preserving a complete low-resolution prefix. Rewinding in next-token prediction returns to an earlier token position, producing a spatially incoherent prefix from which subsequent generation cannot reliably recover the macro structure.

Figure~\ref{fig:failure} illustrates this failure mode. The four previews shown for each prompt correspond to rollout previews at successive intervention scales. For \textit{a round pizza and a rectangular slice of bread}, early previews consist largely of uninformative regions, providing no signal for detecting the missing bread. For \textit{a big bear and a small rabbit}, the absent rabbit remains invisible in the previews until late scales, by which point the semantic commitment is already difficult to reverse.
Extending \ours to next-token prediction models would require an alternative preview mechanism that produces globally coherent visual predictions from partial raster-scan sequences, which we leave for future work.

\section{Additional Experimental Analysis}
\label{sec:appendix_exp_details}

\subsection{Rollout Preview Readability Validation}
\label{appendix:preview_readability}

Table~\ref{tab:preview_readability}
evaluates the diagnostic readability of rollout previews.
The comparison is restricted to visual states available before the final output
is committed, namely direct intermediate decoding and rollout preview.
Final outputs are omitted because they cannot be queried for intervention-time
diagnosis.
For each visual state, Qwen3-VL judges whether prompt-relevant objects,
attributes, relations, and numeracy/action constraints are recognizable.

Across both T2I-CompBench and T2V-CompBench, rollout previews yield higher
recognizability than direct intermediate decoding across all evaluated semantic
dimensions.
The improvement is especially relevant for \stageone, where the MLLM must
diagnose a still-evolving trajectory rather than a completed sample.
The readability validation therefore supports the use of rollout preview as a
semantic proxy for intermediate AR states.
Instead of asking the MLLM to inspect a partially decoded and often ambiguous
state, \ours evaluates a forward rollout that exposes more prompt-relevant
visual evidence while preserving the ability to intervene before final
generation.

\begin{table}[t]
\centering
\small
\resizebox{\columnwidth}{!}{%
\begin{tabular}{llccccc}
\toprule
Benchmark & State & Object & Attribute & Relation & Num./Action & Avg. \\
\midrule
\multirow{2}{*}{T2I-CompBench}
& Raw & 0.278 & 0.308 & 0.263 & 0.251 & 0.286 \\
& Preview & \textbf{0.409} & \textbf{0.415} & \textbf{0.383} & \textbf{0.361} & \textbf{0.405} \\
\midrule
\multirow{2}{*}{T2V-CompBench}
& Raw & 0.211 & 0.213 & 0.200 & 0.181 & 0.205 \\
& Preview & \textbf{0.324} & \textbf{0.321} & \textbf{0.316} & \textbf{0.289} & \textbf{0.316} \\
\bottomrule
\end{tabular}%
}
\caption{\textbf{Rollout preview readability validation under the current \ours setting.} Raw denotes direct intermediate decoding, and Preview denotes rollout preview. Qwen3-VL evaluates whether prompt-relevant semantic constraints are recognizable from each state.}
\label{tab:preview_readability}
\end{table}

\subsection{Intervention Schedule Sweep}
\label{appendix:schedule}

Table~\ref{tab:schedule} summarizes the schedule sweep on InfinityStar with T2I-CompBench.
The schedule is controlled by the diagnosis interval boundaries $\kappa_s$ and $\kappa_e$ and the scale step $\Delta$.
Figure~\ref{fig:schedule_heatmap} further shows task-level evaluator scores across all schedules and evaluators.

The results show that the preferred intervention window depends on the evaluated compositional dimension.
(i) \textit{Shape.}
Shape scores vary only mildly across schedules.
All non-baseline schedules improve over standard sampling, and the highest score is achieved by a full-range schedule with $\Delta=16$.
The shape results indicate that shape correction is relatively robust to the exact intervention interval.
(ii) \textit{Numeracy.}
Numeracy is more sensitive to intervention density, with the full-range schedule at $\Delta=4$ giving the best score among all schedules.
The numeracy results support using frequent corrections when the target prompt requires multiple object instances to remain separable throughout generation.
(iii) \textit{3D spatial reasoning.}
The best 3D Spatial score comes from a late-interval schedule with $\Delta=16$, suggesting that depth-related corrections benefit from previews that already contain sufficient mid- and fine-scale structure.

The dimension-specific variation indicates a practical trade-off.
Early schedules provide earlier semantic steering, late schedules operate on more visually informative previews, and full-range schedules cover both regimes.
We use the full-range schedule with $\Delta=4$ as a balanced default because it gives the best 2D Spatial and Numeracy scores while improving all four reported dimensions over the baseline.
When computation is constrained, larger $\Delta$ values are reasonable for shape-oriented prompts.
When numeracy or mixed compositionality is central, a smaller $\Delta$ over the full interval is preferable.
For depth-sensitive spatial prompts, a late-biased interval can be competitive.

\begin{table}[t]
\centering
\small
\resizebox{\columnwidth}{!}{%
\begin{tabular}{lccc|cccc}
\toprule
Schedule & $\kappa_s$ & $\kappa_e$ & $\Delta$ & Shape & 2D Spatial & 3D Spatial & Numeracy \\
\midrule
Baseline & -- & -- & -- & 0.5459 & 0.2739 & 0.3466 & 0.4970 \\
Full-4 & 0 & 1 & 4 & 0.5525 & \textbf{0.2874} & 0.3546 & \textbf{0.5367} \\
Late-8 & 0.5 & 1 & 8 & 0.5582 & 0.2852 & 0.3594 & 0.5103 \\
Late-16 & 0.5 & 1 & 16 & 0.5565 & 0.2787 & \textbf{0.3603} & 0.5173 \\
Early-4 & 0 & 0.5 & 4 & 0.5613 & 0.2698 & 0.3575 & 0.5201 \\
Early-8 & 0 & 0.5 & 8 & 0.5598 & 0.2791 & 0.3496 & 0.5138 \\
Full-8 & 0 & 1 & 8 & 0.5697 & 0.2578 & 0.3543 & 0.5197 \\
Early-16 & 0 & 0.5 & 16 & 0.5656 & 0.2648 & 0.3405 & 0.5242 \\
Full-16 & 0 & 1 & 16 & \textbf{0.5708} & 0.2626 & 0.3438 & 0.5122 \\
Late-4 & 0.5 & 1 & 4 & 0.5687 & 0.2618 & 0.3496 & 0.4984 \\
\bottomrule
\end{tabular}%
}
\caption{\textbf{Schedule sweep on InfinityStar with T2I-CompBench.} Full, Early, and Late denote diagnosis intervals $[\kappa_s,\kappa_e]=[0,1]$, $[0,0.5]$, and $[0.5,1]$.}
\label{tab:schedule}
\end{table}

\begin{figure*}[t]
\centering
\includegraphics[width=\textwidth]{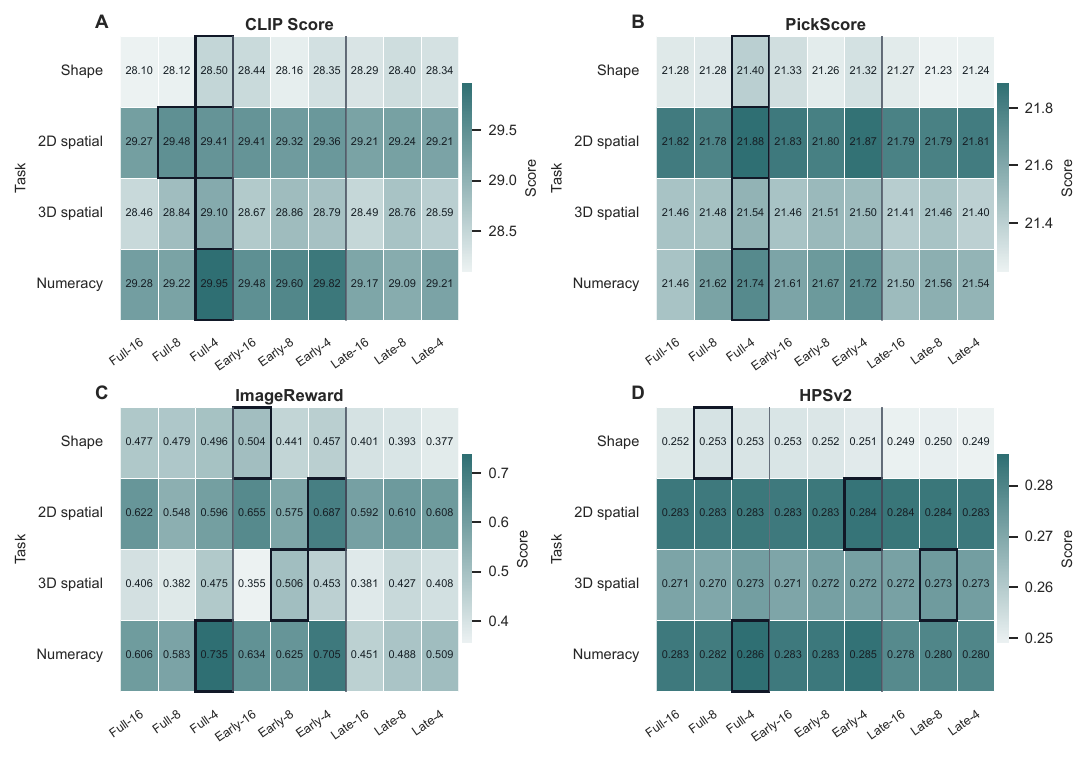}
\caption{\textbf{Task-level evaluator scores across T2I intervention schedules.} Each heatmap corresponds to one evaluator, with rows denoting T2I-CompBench task groups and columns denoting intervention schedules. Cell colors and annotations report the raw evaluator scores. Black boxes mark the best schedule for each task and evaluator pair.}
\label{fig:schedule_heatmap}
\end{figure*}

\subsection{Inference Cost and Detailed Efficiency Analysis}
\label{appendix:efficiency}

\input{tables/efficiency}

Table~\ref{tab:efficiency} provides a detailed breakdown of inference cost, memory usage, and generation quality.
Table~\ref{tab:efficiency_breakdown} further decomposes the full \ours pipeline into its main runtime components.

Compared with standard sampling, \ours increases inference time by approximately $63\%$ due to the additional rollout-preview and diagnosis stages.
Nevertheless, the total runtime remains substantially lower than Best-of-2 sampling under a comparable compute budget.
The module-level breakdown in Table~\ref{tab:efficiency_breakdown} clarifies where the additional cost is spent.
Among the intervention-specific components, MLLM evaluation is the dominant term, taking 44.91 seconds on average and accounting for $27.46\%$ of the full pipeline.
Preview construction is smaller at 12.86 seconds, while reflection adds only 1.87 seconds.
The cost is therefore concentrated in visual evaluation by the MLLM rather than in the subsequent reflection step.
Most wall-clock time still comes from remaining generation and decoding, which takes 103.93 seconds and accounts for $63.54\%$ of the pipeline.
The breakdown suggests that future efficiency gains should primarily come from reducing the number or cost of MLLM evaluations, for example through adaptive triggering, preview reuse, or batched evaluation across intervention points.

Importantly, the generator-side peak memory remains almost unchanged
(59.64 GB vs.\ 59.40 GB), indicating that the proposed intervention mechanism does not introduce significant overhead to the frozen model itself.
The additional memory consumption mainly originates from the MLLM used for trajectory diagnosis.

Despite operating under a lower inference budget than Best-of-2 sampling, \ours achieves consistently stronger performance across all compositional and preference-related metrics, suggesting that trajectory correction through semantic feedback is substantially more effective than increasing sampling diversity through repeated generation.

\subsection{Qualitative Effect of Rollout Preview}
\label{appendix:preview}

\begin{figure*}[t]
\centering
\includegraphics[width=\textwidth]{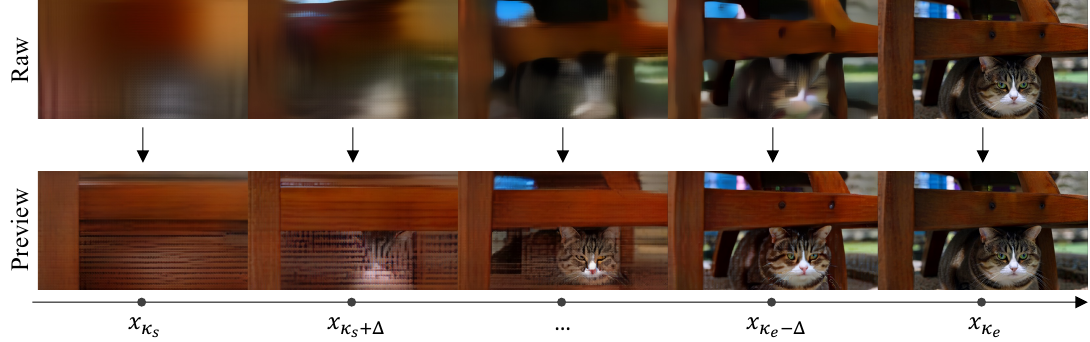}
\caption{\textbf{Qualitative Effect of Rollout Preview.} At each intervention scale $k$ along the coarse-to-fine trajectory, the raw decoded state $\mathcal{D}(x_{1:k})$ (top) is compared with the rollout preview $\mathcal{D}(x_{1:k}, \tilde{x}_{k+1:K})$ (bottom). The preview provides a semantically coherent visual prediction that enables reliable MLLM diagnosis, whereas the raw intermediate state lacks sufficient visual semantics for accurate evaluation. Prompt: 'A cat sitting under a wooden chair.'}
\label{fig:case_preview}
\end{figure*}

Figure~\ref{fig:case_preview} compares raw intermediate decoding with rollout previews at selected scales along the coarse-to-fine trajectory. At each intervention scale $k$, the raw decoded state $\mathcal{D}(x_{1:k})$ provides only a low-resolution, semantically incomplete representation, from which the MLLM cannot reliably assess object presence, attributes, or compositional relations. The rollout preview $\mathcal{D}(x_{1:k}, \tilde{x}_{k+1:K})$, by contrast, extends the partial trajectory to full resolution under the original sampling distribution, yielding a globally coherent visual prediction that faithfully reflects the semantic tendency of the current generation. This enables the MLLM to produce accurate diagnostic feedback, which \stageone converts into enhancement and suppression cues for subsequent trajectory correction.

\subsection{More Qualitative Results}
\label{app:qualitative}

\begin{figure*}[t]
\centering
\includegraphics[width=\textwidth]{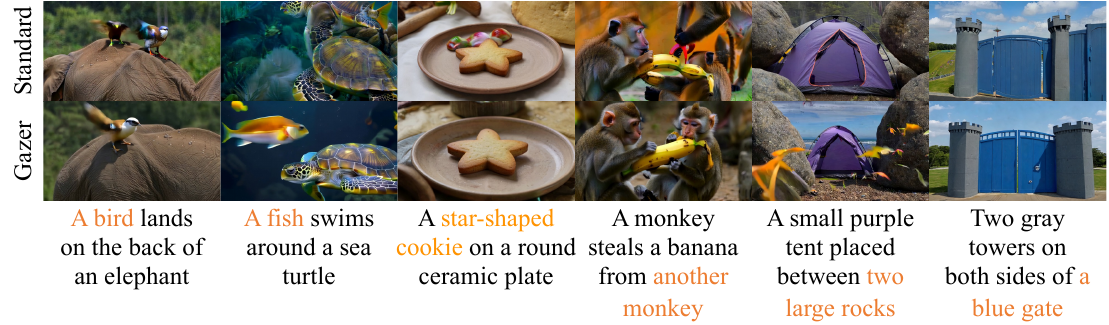}
\caption{
\textbf{Qualitative comparisons with InfinityStar on compositional text-to-image generation.}
}
\label{fig:qual_infstar_t2i}
\end{figure*}

\begin{figure*}[t]
\centering
\includegraphics[width=\textwidth]{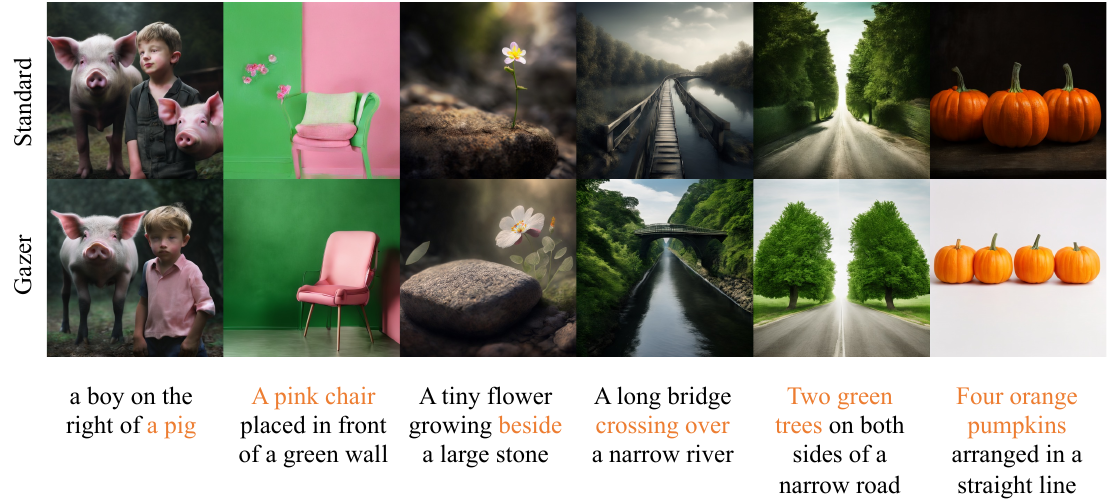}
\caption{
\textbf{Qualitative comparisons with STAR on compositional text-to-image generation.}
}
\label{fig:qual_star_t2i}
\end{figure*}

\begin{figure*}[t]
\centering
\includegraphics[width=\textwidth]{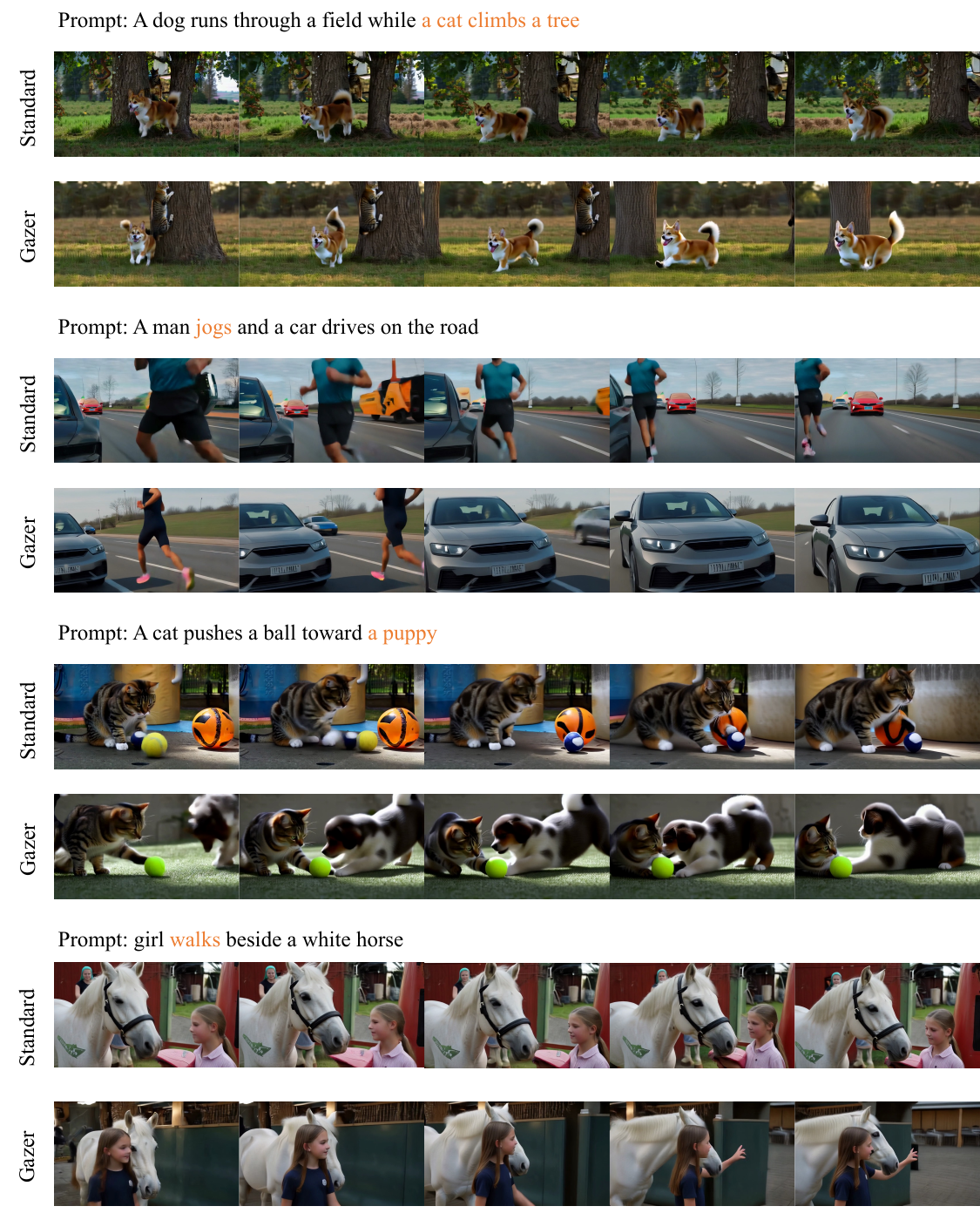}
\caption{
\textbf{Qualitative comparisons with InfinityStar on compositional text-to-video generation.}
}
\label{fig:qual_infstar_t2v}
\end{figure*}

\begin{figure*}[t]
\centering
\includegraphics[width=\textwidth]{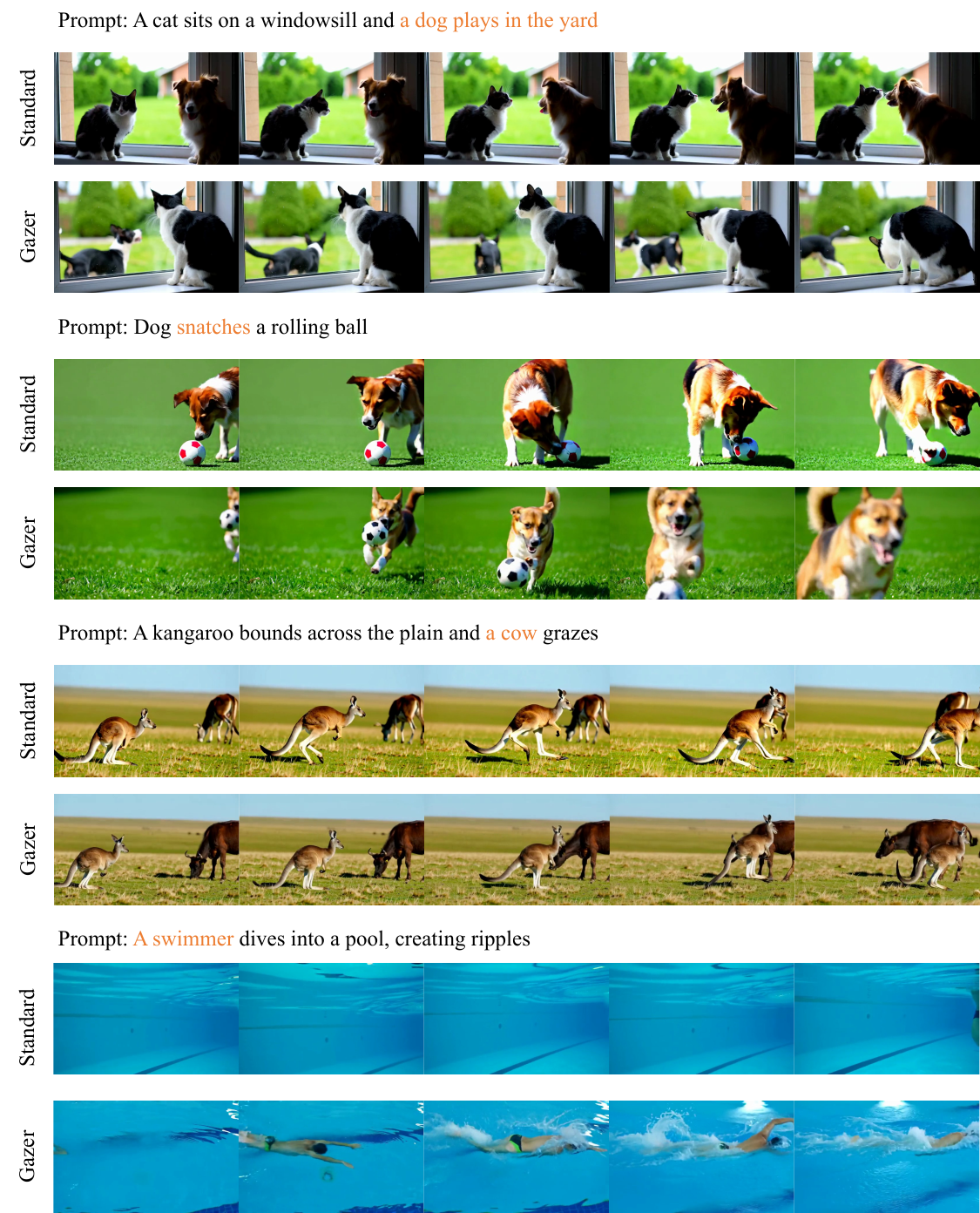}
\caption{
\textbf{Qualitative comparisons with Helios on compositional text-to-video generation.}
}
\label{fig:qual_helios_t2v}
\end{figure*}

We provide additional qualitative comparisons on both text-to-image and text-to-video compositional benchmarks to further illustrate the effectiveness of \ours in improving semantic alignment during hierarchical autoregressive generation.
Figures~\ref{fig:qual_infstar_t2i} and~\ref{fig:qual_star_t2i} present representative results on T2I-CompBench using InfinityStar and STAR, respectively, while Figures~\ref{fig:qual_infstar_t2v} and~\ref{fig:qual_helios_t2v} show text-to-video results on T2V-CompBench using InfinityStar and Helios.

Across diverse prompts involving object composition, attribute binding, spatial relations, and temporal consistency, the baseline models frequently exhibit semantic inconsistencies such as missing objects, incorrect attributes, relation confusion, or progressive drift across scales and frames.
In contrast, \ours produces generations that better preserve the compositional semantics specified in the prompt, yielding improved object completeness, attribute correctness, relational fidelity, and overall semantic coherence.

Notably, the improvements are consistently observed across different autoregressive backbones and generation modalities, demonstrating that the proposed reflective diagnosis and semantic correction mechanism generalizes effectively to both image and video generation settings.
These examples further support our claim that semantic inconsistencies in hierarchical autoregressive generation often emerge progressively along the generation trajectory, and can be mitigated through iterative rollout-based diagnosis and correction.

\section{Prompt Templates for MLLM Feedback}
\label{sec:prompt_templates}

\begin{figure*}[t]
\centering
\includegraphics[width=\textwidth]{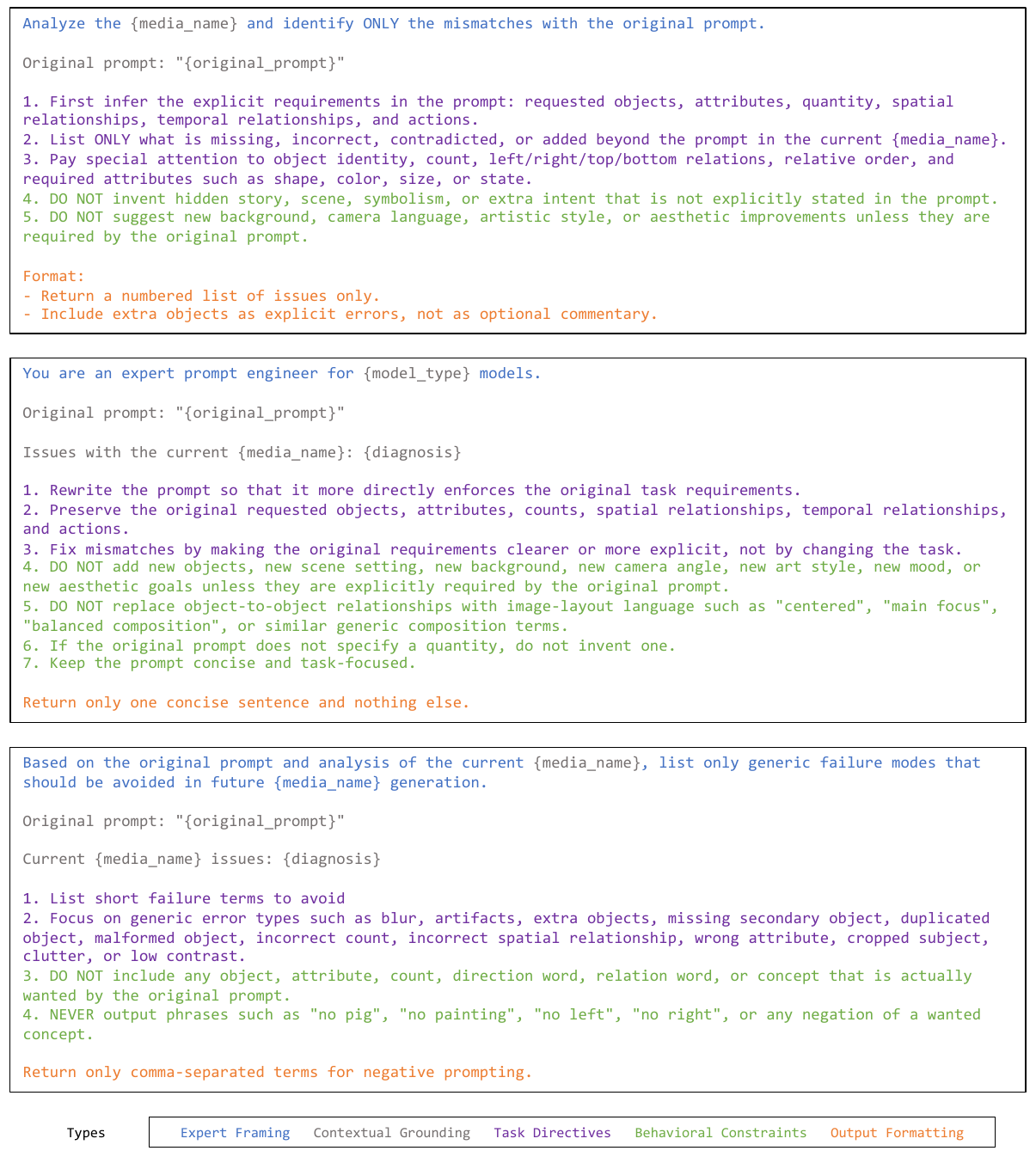}
\caption{\textbf{Prompt templates for MLLM feedback elicitation.} Three structured templates correspond to the three feedback signals produced by \stageone: \textit{Diagnosis}, \textit{Enhancement Cue}, and \textit{Suppression Cue}. Each template is organized into five functional components: Expert Framing, Contextual Grounding, Reasoning Instructions, Behavioral Constraints, and Output Specification.}
\label{fig:prompt_templates}
\end{figure*}

\ours elicits the MLLM using three structured prompt templates corresponding to the three feedback signals produced by \stageone: \textit{Diagnosis}, \textit{Enhancement Cue}, and \textit{Suppression Cue}.
As shown in Figure~\ref{fig:prompt_templates}, each template is organized into five functional components: \textit{Expert Framing} defines the role and perspective of the MLLM; \textit{Contextual Grounding} provides the input prompt and rollout preview; \textit{Reasoning Instructions} guide the MLLM through the diagnostic or rewriting task; \textit{Behavioral Constraints} restrict undesired outputs such as hallucinated objects or aesthetic suggestions; and \textit{Output Specification} enforces a structured, generation-ready format.
This modular design ensures consistent and controllable MLLM behavior across varied inputs and strengthens the reliability of in-generation semantic correction.

%% file: tables/efficiency.tex

\begin{table*}[t]
\centering
\small
\setlength{\tabcolsep}{3pt}
\resizebox{\textwidth}{!}{%
\begin{tabular}{lccccccccc}
\toprule
& \multicolumn{3}{c}{Inference Cost}
& \multicolumn{2}{c}{Memory Usage}
& \multicolumn{4}{c}{Generation Quality} \\
\cmidrule(lr){2-4}
\cmidrule(lr){5-6}
\cmidrule(lr){7-10}

Method
& Avg Time (s)
& Relative Time
& Gen Peak (GB)
& MLLM Peak (GB)
& Combined Peak (GB)
& Consist.\ Attr.
& CLIP Score
& PickScore
& ImageReward \\
\midrule

Baseline (seed 42)
& 100.09
& $1.00\times$
& 59.40
& 0
& 59.40
& 13.07
& 26.66
& 21.51
& 0.2811 \\

Baseline (seed 43)
& 100.09
& $1.00\times$
& 59.40
& 0
& 59.40
& 12.52
& 26.61
& 21.62
& 0.2690 \\

Baseline Best-of-2
& 200.18
& $2.00\times$
& 59.40
& 0
& 59.40
& 13.23
& 26.62
& 21.43
& 0.2802 \\

\ours
& 163.57
& $\mathbf{1.63\times}$
& 59.64
& 17.80
& 77.44
& \textbf{14.20}
& \textbf{27.77}
& \textbf{22.28}
& \textbf{1.1452} \\

\bottomrule
\end{tabular}%
}
\caption{
\textbf{Detailed inference cost and efficiency comparison.}
\ours introduces additional overhead due to rollout preview
and MLLM-based trajectory diagnosis,
but still operates below the cost of Best-of-2 sampling.
The generator-side peak memory remains nearly unchanged,
while the additional memory mainly comes from the MLLM.
Despite the lower inference budget,
\ours consistently achieves superior compositional and preference-alignment scores.
}
\label{tab:efficiency}
\end{table*}

\begin{table*}[t]
\centering
\small
\resizebox{\textwidth}{!}{%
\begin{tabular}{lcccc}
\toprule
Module & Avg Time (s) & Share of Full Pipeline & Relative to Baseline Total & Avg Calls per Video \\
\midrule
Preview & 12.86 & 7.86\% & 12.85\% & 4 \\
MLLM Evaluation & 44.91 & 27.46\% & 44.87\% & 4 \\
Reflection & 1.87 & 1.14\% & 1.87\% & 4 \\
Remaining Generation & 103.93 & 63.54\% & 103.83\% & -- \\
\bottomrule
\end{tabular}%
}
\caption{\textbf{Module-level runtime breakdown of the full \ours pipeline.} The percentage of baseline total time is computed relative to standard sampling runtime.}
\label{tab:efficiency_breakdown}
\end{table*}